\DeclareMathOperator*{\argmin}{arg\,min}
\DeclareMathOperator*{\argmax}{arg\,max}
\newcommand{\E}{\mbox{\bf E}}
\newcommand{\defcal}[1]{\expandafter\newcommand\csname c#1\endcsname{{\mathcal{#1}}}}
\newcommand{\defbb}[1]{\expandafter\newcommand\csname b#1\endcsname{{\mathbb{#1}}}}
\newcounter{calBbCounter}
	\edef\letter{\Alph{calBbCounter}}
\newcommand{\INPUT}{\item[{\bf Input:}]}
\newcommand{\OUTPUT}{\item[{\bf Output:}]}
\newcommand{\Opt}{\text{\upshape{\texttt{OPT}}}\xspace}
\newtheorem{theorem}{Theorem}
\newtheorem{definition}[theorem]{Definition}
\newtheorem{lemma}[theorem]{Lemma}
\newtheorem{corollary}[theorem]{Corollary}
\newcommand{\AlgBarrier}{{\textsc{\textsc{Barrier-Greedy}}}\xspace}
\newcommand{\AlgImproved}{{\textsc{\textsc{Barrier-Greedy++}}}\xspace}
\newcommand{\AlgHeuristic}{{\textsc{\textsc{Barrier-Heuristic}}}\xspace}
\newcommand{\AlgExchange}{{\textsc{\textsc{ExchangeCandidate}}}\xspace}
\newcommand{\Optset}{S^*}
\crefname{algocf}{Algorithm}{Algorithms}
\Crefname{algocf}{Algorithm}{Algorithms}
\renewcommand{\emptyset}{\varnothing}
\newcommand{\printtime}{01/01/2019}
\title{Submodular Maximization Through Barrier Functions}
\author
{
	Ashwinkumar Badanidiyuru\thanks{Google, Mountain View, USA. Email: \texttt{ashwinkumarbv@gmail.com}.}
	\and
	Amin Karbasi\thanks{Yale Institute for Network Science, Yale University. Email: \texttt{amin.karbasi@yale.edu}.}
	\and 
	Ehsan Kazemi\thanks{Yale Institute for Network Science, Yale University. Email: \texttt{ehsan.kazemi@yale.edu}.}
	\and
	Jan Vondr\'ak\thanks{Department of Mathematics, Stanford University, USA. Email: \texttt{jvondrak@stanford.edu}.}
}
\date{}
\begin{document}
	
	\pagenumbering{arabic}
	
	\maketitle
	
\begin{abstract}
	In this paper, we introduce a  novel technique for constrained submodular maximization, inspired by barrier functions in continuous optimization. This connection not only improves the running time for constrained submodular maximization but also provides the state of the art guarantee. More precisely, for maximizing a monotone submodular function subject to the combination of a $k$-matchoid and $\ell$-knapsack constraint (for $\ell\leq k$), we propose a potential function that can be approximately minimized. Once we minimize the potential function up to an $\epsilon$ error it is guaranteed that we have found a feasible set with a $2(k+1+\epsilon)$-approximation factor which can indeed be further improved to $(k+1+\epsilon)$ by an enumeration technique. We extensively evaluate the performance of our proposed algorithm over several real-world applications, including a movie recommendation system, summarization tasks for YouTube videos, Twitter feeds and Yelp business locations, and a set cover problem.
\end{abstract}
\section{Introduction} \label{sec:intro}

In the constrained continuous optimization, barrier functions are usually used to impose an increasingly large cost on a feasible point as it  approaches the boundary of the feasible region
\cite{nocedal2006numerical}.
In effect, barrier functions replace constraints by a penalizing term in the primal objective function so that the solution stays away from the boundary of the feasible region. This is an attempt to approximate a constrained optimization problem with an unconstrained one and to later apply standard optimization techniques. 
While the benefits of barrier functions are studied extensively in the continuous domain \cite{nocedal2006numerical}, their use  in discrete optimization is  not very well understood.

In this paper, we show how discrete barrier functions  manifest themselves in constrained submodular maximization. Submodular functions formalize the intuitive diminishing returns condition, a property that not only allows optimization tractability but also appears in many machine learning applications, including video, image, and text summarization \cite{gygli2015video,tschiatschek2014learning,LB11,mitrovic2018data,feldman2018do},
active set selection in non-parametric learning \cite{MKSK16}, sequential decision making \cite{mitrovic2018submodularity,mitrovic2019adaptive}
sensor placement, information gathering \cite{guestrin2005near}, privacy and fairness \cite{kazemi2018scalable}. 
Formally, for a ground set $\cN$, a non-negative set function $f: 2^{\cN} \rightarrow \bR_{\geq 0}$ is \textbf{submodular} if for all sets $A \subseteq B \subset \cN$ and every element $e \in  \cN \setminus B$, we have 
\[f(A \cup \{e\}) - f(A) \geq f(B \cup \{e\}) - f(B) \enspace.\]
The submodular function $f$ is monotone if for all $A \subseteq B$ we have $f(A) \leq f(B)$. 

The celebrated results of \citet{NWF78} and \citet{FNW78} show that the vanilla greedy algorithm provides an optimal approximation guarantee for maximizing a monotone submodular function subject to a cardinality constraint. However, the performance of the greedy algorithm degrades as the feasibility constraint becomes more complex. For instance, the greedy algorithm does not provide any constant factor approximation guarantee if we replace the cardinality constraint with a knapsack constraint. Even though there exist many works that achieve the tight approximation guarantee for maximizing a monotone submodular function subject to multiple knapsack constraints, the running time of these algorithms is prohibitive as they either rely on enumerating large sets or running the continuous greedy algorithm. In contrast,  we showcase  a fundamentally new optimization technique through a discrete barrier function minimization in order  to efficiently handle knapsack constraints and develop fast algorithms. More formally, we consider the following constrained submodular maximization problem defined over the ground set $\cN$:
\begin{align} \label{eq:problem}
	\Optset = \underset{\substack{S\subseteq \cN,\; S \in \cI \\ c_i(S) \leq 1 \; \forall\; i \in [\ell]}}{\argmax}  \mspace{-10mu} f(S) \enspace,
\end{align}
where the constraint is the intersection of a  $k$-matchoid constraint $\cM(\cN, \cI)$ (a general subclass of $k$-set systems) and $\ell$ knapsacks constraints $c_i$ (for $i\in[\ell]$).

\paragraph{Contributions.} 
We propose two algorithms for maximizing a monotone and submodular function subject to the intersection of a $k$-matchoid and $\ell$ knapsack constraints.
Our approach uses a novel barrier function technique and lies  in between fast thresholding algorithms with suboptimal approximation ratios and slower algorithms that use continuous greedy and rounding methods. 
 The first algorithm, \AlgBarrier, obtains a $2(k+1+\epsilon)$-approximation ratio and runs in $\tilde{O}(n r^2)$ time, where $r$ is the maximum cardinality of a feasible solution. The second algorithm, \AlgImproved, obtains a better approximation ratio of $(k+1+\epsilon)$, but at the cost of $\tilde{O}(n^3 r^2)$ running time.
 Our algorithms are theoretically fast and even exhibit better performance in practice while achieving a near-optimal approximation ratio.
 Indeed, the factor of $k+1$ matches the greedy algorithm for $k$ matroid constraints \cite{FNW78}. The only known improvement of this result requires a more sophisticated (and very slow) local-search algorithm \cite{LSV09}.
Our results show that barrier function minimization techniques provide a versatile algorithmic tool for constrained submodular optimization with strong theoretical guarantees that may scale to many previously intractable problem instances. 
Finally, we demonstrate the effectiveness of our proposed algorithms over several real-world applications, including a movie recommendation system, summarization tasks for YouTube videos, Twitter feeds of news agencies and Yelp business locations, and a set cover problem.

\paragraph{Paper Structure.} In \cref{sec:preliminaries}, we formally define the notation and the constraints we use. In \cref{sec:matroids+knapsacks}, we describe our proposed barrier function. We then present our algorithms for maximizing a monotone submodular function subject to a $k$-matchoid system and $\ell$ knapsack constraints. In \cref{sec:heuristic}, built upon of theoretical results, we present a heuristic algorithm with a better performance in practice.
In \cref{sec:experiments}, we describe the experiments we conducted to study the empirical performance of our algorithms.

\section{Related Work}\label{sec:related}

The problem of maximizing a monotone submodular function subject to various constraints goes back to the seminal work of \citet{NWF78} and \citet{FNW78} which showed that the greedy algorithm gives a $(1-1/e)$-approximation subject to a cardinality constraint, and more generally a $1/p$-approximation for any $p$-system (which subsumes the intersection of $p$ matroids, and also the $p$-matchoid constraint considered here). \citet{NW78} also showed that the factor of $1-1/e$ is best possible in this setting. After three decades, there was a resurgence of interest in this area due to new applications in economics, game theory and machine learning. While we cannot do justice to all the work that has been done in submodular maximization, let us mention the works most relevant to ours---in particular focusing on matroid/matchoid and knapsack constraints.

\citet{Sviri04} gave the first algorithm to achieve a $(1-1/e)$-approximation for submodular maximization subject to a knapsack constraint. This algorithm, while relatively simple, requires enumeration over all triples of elements and hence its running time is rather slow ($\tilde{O}(n^5)$). \citet{Vondrak08} and \citet{CCPV11} gave the first $(1-1/e)$-approximation for submodular maximization subject to a matroid constraint. This algorithm, continuous greedy with pipage rounding, is also relatively slow (at least $\tilde{O}(n^3)$, depending on implementation). Using related techniques, \citet{KST09} gave a $(1-1/e)$-approximation subject to any constant number of knapsack constraints, and \citet{CVZ10} gave a $(1-1/e)$-approximation subject to one matroid and any constant number of knapsack constraint; however, these algorithms are even slower and less practical. 

Following these results (optimal in terms of approximation), applications in machine learning called for more attention being given to running time and practicality of the algorithms (as well as other aspects, such as online/streaming inputs and distributed/parallel implementations, which we do not focus on here). In terms of improved running times, \citet{GRST10} developed fast algorithms for submodular maximization (motivated by the online setting), however with suboptimal approximation factors. \citet{bv2014} provided a $(1-1/e-\epsilon)$-approximation subject to a cardinality constraint using $O(\frac{n}{\epsilon} \log \frac{1}{\epsilon})$ value queries, and subject to a matroid constraint using $O(\frac{n^2}{\epsilon^4} \log^2 \frac{n}{\epsilon})$ queries. Also, they gave a fast thresholding algorithm providing a $1/(p+2\ell+1+\epsilon)$-approximation for a $p$-system combined with $\ell$ knapsack constraints using $O(\frac{n}{\epsilon^2} \log^2 \frac{n}{\epsilon})$ queries. This was further generalized to the non-monotone setting by \citet{mirzasoleiman2016fast}. However, note that in these works the approximation factor deteriorates not only with the $p$-system parameter (which is unavoidable) but also with the number of knapsack constraints $\ell$.

\section{Preliminaries and Notation} \label{sec:preliminaries}

Let $f:2^\cN \to \bR_{\geq 0}$ be a non-negative and monotone submodular function defined over ground set $\cN$.
Given an element $a$ and a set $A$, we use $A + a$ as a shorthand for the union $A \cup \{a\}$. We also denote the marginal gain of adding  $a$ to a $A$ by $f(a \mid A) \triangleq f(A + a) - f(A)$. Similarly, the marginal gain of adding a set $B \subseteq \cN$ to another set $A \subseteq \cN$ is denoted by $f( B \mid A) \triangleq f(B \cup A) - f(A)$.

A \emph{set system} $\cM = (\cN, \cI) \mbox{ with } \cI \subseteq 2^\cN$ is an \emph{independence system} if $\emptyset \in \cI$ and $A \subseteq B$, $B \in \cI$ implies that $A \in \cI$.
In this regard, a set $A \in \cI$ is called independent, and a set $B \notin \cI$ is called dependent. 
A \emph{matroid} is an independence system with the following additional property: if $A$ and $B$ are two independent sets obeying $|A| < |B|$, then there exists an element $a \in B \setminus A$ such that $A + a$ is independent.

In this paper, we consider two different constraints. The first constraint is in an intersection of $k$ matroids or a $k$-matchoid (as a generalization of the intersection of $k$-matroids). The second constraint is the set of $\ell$ knapsacks for $\ell \leq k$. Next, we formally define these constraints.

\begin{definition} \label{def:intersection-matroids}
Let $\cM_1 = (\cN,\cI_1),\ldots,\cM_k=(\cN,\cI_k)$ be $k$ arbitrary matroids over the common ground set $\cN$.
An intersection of $k$ matroids is an independent system  $\cM(\cN, \cI)$ such that $\cI = \{ S \subseteq \cN \mid  \forall i,  S \in \cI_i \}$.
\end{definition}

\begin{definition} \label{def:k-matchoid}
	An independence set system $(\cN, \cI)$ is a $k$-matchoid if there exist $m$ different matroids 
	$(\cN_1, \cI_1), \dotsc, (\cN_m, \cI_m)$ such that $\cN= \cup_{i=1}^m \cN_i$, each 
	element $e \in \cN$ appears in no more than $k$ ground sets among $\cN_1, \dotsc,  \cN_m$ and 
	$\cI = \{ S \subseteq \cN \mid \forall i, \cN_i \cap S \in \cI_i \}$.
\end{definition}

A knapsack constraint is defined by a cost vector $c$ for the ground set $\cN$, where for the cost of a set $S \subseteq \cN$ we have $c(S) = \sum_{e \in S} c_e$. Given a knapsack capacity (or budget) $C$, a set $S \subseteq \cN$ is said to satisfy the knapsack constraint $c$ if $c(S) \leq C$. We assume, without loss of generality, the capacity of all knapsacks are normalized to $1$.

Assume there is a global ordering of elements $\cN = \{1,2,3,\ldots,|\cN|\}$.
For a set $S \subseteq \cN$ and an element $a \in \cN$,  the contribution of $a$ to $S$ (denoted by $w_a$) is the marginal gain of adding element $a$ to all elements of $S$ that are smaller than $a$, i.e., $w_a = f(S \cap [a]) - f(S \cap [a-1])$.
 From the submodularity of $f$, it is straightforward to show that $f(S) = \sum_{a \in S} w_a$.
The benefit of adding $b \notin S $ to set $S$ (denoted by $w_b$)
 is the marginal gain of adding element $b$ to set $S$, i.e., $w_b = f(S+b) - f(S)$.
Furthermore, for each element $a$, $\gamma_a = \sum_{i=1}^{k} c_{i,a}$ represents the aggregate cost of $a$ over all knapsacks.
It is easy to see that $\sum_{i=1}^{k} c_i(S) = \sum_{a \in S} \gamma_a$.
We also denote the latter quantity, the aggregate cost of all elements of $S$ over all knapsack, by $\gamma(S)$. 
Since we have $\ell$ knapsacks and the capacity of each knapsack is normalized to $1$, for any feasible solution $S$, we have always $\gamma(S) \leq k$.




\section{The Barrier Function and Our Algorithms}
\label{sec:matroids+knapsacks}
In this section, we first explain our proposed barrier function. We then present \AlgBarrier and \AlgImproved and prove that these two algorithms, by efficiently finding a local minimum of the barrier function, can efficiently maximize a monotone submodular function subject to the intersection of $k$-matroids and $\ell$ knapsacks. At the end of this section, we demonstrate how our algorithms could be extended to the case of $k$-matchoid constraints. 

\subsection{The \AlgBarrier Algorithm} \label{sec:alg-barrier}
Existing local search algorithms under $k$ matroid constraints try to maximize the objective function over a space of $O(n^k)$ feasible swaps \cite{LMNS09,LSV09}; however, our proposed method, a new {\bf local-search} algorithm called \AlgBarrier, avoids the exponential dependence on $k$ while it incorporates the additional knapsack constraints.
Note that the knapsack constraints generally make the structure of feasible swaps even more complicated.

As a first technical contribution, instead of making the space of feasible swaps huge and more complicated, we incorporate the knapsack constraints into a \textbf{potential function} similar to {\bf barrier functions} in the continuous optimization domain. 
For a set function $f(S)$ and intersection of $k$ matroids and $\ell$ knapsack constraints $c_i(S) \leq 1, i \in [\ell]$, we propose the following potential function:
\begin{align} \label{eq:potential}
\phi(S) = \frac{\Opt - (k+1) \cdot f(S)}{1 - \sum_{i=1}^{\ell} c_i(S)}  \enspace,
\end{align}
where $\Opt$ is the optimum value for Problem \eqref{eq:problem}.
This potential function incorporates the knapsack constraints in a very conservative way: while $\gamma(S)$ for a feasible set $S$ could be as large as $\ell$, we consider only sets with $\gamma(S) \leq 1$, whereas for sets with a larger weight the potential function becomes negative.\footnote{In \cref{sec:heuristic}, we propose a version of our algorithm that is more aggressive towards approaching the boundaries of knapsack constraints.}
We point out that the choice of our potential function works best for a combination of $k$ matroids and $k$ knapsacks. When the number of matroid and knapsack constraints is not equal, we can always add redundant constraints so that $k$ is the maximum of the two numbers.
For this reason, in the rest of this paper, we assume $\ell = k$.

In \AlgBarrier, our main goal is to efficiently {\bf minimize} the potential function in several consecutive sequential rounds.
This potential function is designed in a way such that either the current solution respects all the knapsack constraints or if the solution violates any of the knapsack constraints, we can guarantee that the objective value is already sufficiently large.
Note that the potential function involves the knowledge of $\Opt$---we replace this by an estimate that we can ``guess" (enumerate over) efficiently by a standard technique. 

As a second technical contribution, we optimize the local search procedure for $k$ matroids. More precisely, we improve the previously known $O(n^k)$ running time of \citet{LMNS09}  to a new  method with time complexity of $O(n^2)$. This is accomplished by a novel greedy approach that efficiently searches for the best existing swap, instead of a brute-force search among all possible swaps. With these two points in mind, we now proceed to explain our first proposed algorithm \AlgBarrier, in detail.

In the running of \AlgBarrier, we require an accurate enough estimate of the optimum value $\Opt$ that we denote by $\Omega$. 
Indeed, a technique first proposed by \citet{badanidiyuru2014streaming} can be used to guess such a value:
from the submodularity of $f$, we can deduce that $M \leq \Opt \leq r M$, where $M$ is the largest value in the set $\{f(\{j\}) \mid j \in \cN \}$ and $r$ is the maximum cardinality of a feasible solution.
Then, it suffices to try $O(\frac{\log r}{\epsilon})$ different guesses in the set $\Lambda = \{(1+\epsilon)^{i} \mid \nicefrac{M}{(1+ 
	\epsilon)} \leq (1+\epsilon)^{i} \leq r M \}$  to obtain a close enough estimate of \Opt. 
In the rest of this section, we assume that we have access to a value of $\Omega$ such that $(1-\epsilon) \Opt \leq \Omega \leq \Opt$. 
Using $\Omega$ as an estimate of $\Opt$, our potential function converts to
\[ \phi(S) = \frac{\Omega - (k+1) \cdot f(S)}{1 - \gamma(S)} \enspace . \]
To quantify  the effect of each element $a$  on the potential function $\phi$, as a notion of their individual energy, we define the following quantity:
\begin{align} \label{eq:delta-a}
\delta_a = (k+1)\cdot (1 - \gamma(S)) \cdot w_{a} -(\Omega - (k+1) \cdot f(S)) \cdot \gamma_{a} \enspace. 
\end{align}
The quantity $\delta_a$ measures how desirable an element is with respect to the current solution $S$, i.e., larger values of $\delta_a$ would have a larger effect on the potential function.
Also, any element $a \in S$ with $\delta_a \leq 0$ can be removed from the solution without increasing the potential function (see Lemma~\ref{lem:negative-delta}).

The \AlgBarrier algorithm starts with an empty set $S$ and performs the following steps for at most $r \log (1/\epsilon)$ iterations or till it reaches a solution $S$ such that $f(S) \geq \frac{(1-\epsilon) \Omega}{k+1}$:
Firstly, it finds an element $b \in \cN \setminus S$ with the maximum value of $ \delta_b - \sum_{i \in J_b} \delta_{a_i}$ such that $S-a_i+b \in \cI_i$ for $a_i \in S$ and $i \in J_b \triangleq \{ j \in [k]: S+b \notin \cI_j \}$.
\AlgBarrier computes values of  $\delta_a$ from \cref{eq:delta-a}. Note that, in this step, we need to compute $\delta_a$ for all elements $a \in \cN$ only once and store them; then we can use these pre-computed values to find the best candidate $b$. 
The goal of this step is to find an element $b$ such that its addition to set $S$ and removal of a corresponding set of elements from $S$ decrease the potential function by a large margin while still keeping the solution feasible.
In the second step, \AlgBarrier removes all elements with $\delta_{a} \leq 0$ form set $S$. 
In Lemma~\ref{lem:negative-delta}, we prove that these removals could only decrease the potential function.
The \AlgBarrier algorithm produces a solution with a good objective value mainly for two reasons:
\begin{itemize}
	\item if it continues for $r \log (1/\epsilon)$ iterations, we can prove that the potential function would be very close to $0$, which consequently enables us to guarantee the performance for this case.
	Note that, for our solution, we maintain the invariant that $\gamma(S) < 1$ to make sure the knapsack constraints are also satisfied.
	\item if $f(S) \geq \frac{(1-\epsilon)\Omega}{k+1} $, we would prove that the objective value of one the two feasible sets $S \setminus \{b\}$ and $\{b\}$ is at least  $\frac{(1-\epsilon)\Omega}{2(k+1)}$, where $b$ is the last added element to $S$.
\end{itemize}
The details of \AlgBarrier are described in Algorithm~\ref{alg:matroids+knapsacks}. 
Theorem~\ref{thm:matroids+knapsacks} guarantees the performance of \AlgBarrier.

\begin{algorithm}[htb!]
	\caption{\AlgBarrier}
	\label{alg:matroids+knapsacks}
	\begin{algorithmic}[1]
		\INPUT $f:2^\cN \to \bR_{\geq 0}$, membership oracles for $k$ matroids $\cM_1 = (\cN,\cI_1),\ldots,\cM_k=(\cN,\cI_k)$, and $\ell$ knapsack-cost functions $c_i: \cN \rightarrow [0,1]$.
		\OUTPUT A set $S \subseteq \cN  $ satisfying $S \in \bigcap_{i=1}^{k} \cI_i$ and $c_i(S) \leq 1 \  \forall i$.
		\STATE $M \leftarrow \max_{j \in \cN} f(\{j\})$
		\STATE $\Lambda \gets \{(1+\epsilon)^{i} \mid \nicefrac{M}{(1+ \epsilon)} \leq (1+\epsilon)^{i} \leq r M \}$ as potential estimates  of $\Opt$	
		\FOR{$\Omega \in \Lambda$}
		\STATE $S \gets \emptyset$
		\WHILE {$f(S) < \frac{1-\epsilon}{k+1} \Omega$ and iteration number is smaller than $r \log \frac{1}{\epsilon}$}
		\STATE Find $b \in \cN \setminus S$ and $a_i \in S$ for $i \in J_b = \{ j \in [k]: S+b \notin \cI_j \}$ s.t. $S-a_i+b \in \cI_i$ and
		$ \delta_b - \sum_{i \in J_b} \delta_{a_i}$ is maximized (compute $\delta_a$ from \cref{eq:delta-a})
		\STATE $S \leftarrow S \setminus \{a_i:  i \in J_b \} + b$
	\STATE\textbf{while} {$\exists a \in S$ s.t. $\delta_a \leq 0$} \textbf{do} Remove $a$ from $S$
		\ENDWHILE
		\IF{set $S$ satisfies all the knapsack constraints}
		{
		\STATE $S_\Omega \gets S$ \label{line:return-S}
	}
\ELSE{
	\STATE $S_\Omega \gets \argmax\{f(\{b\}), f(S-b)\}$, where $b$ is the last added element to $S$ \label{line:return-S-b}
}	
		\ENDIF
		\ENDFOR
		\RETURN $\argmax_{\Omega \in \Lambda} f(S_\Omega)$
	\end{algorithmic}
\end{algorithm}

\begin{theorem}
	\label{thm:matroids+knapsacks}
	\AlgBarrier (Algorithm~\ref{alg:matroids+knapsacks}) provides a $2(k+1+\epsilon)$-approximation for the problem of maximizing a monotone submodular function subject to the intersection of $k$ matroids and $\ell$ knapsack constraints (for $\ell \leq k$).
	It also runs in time $O(\frac{n r^2}{\epsilon} \log r \log \frac{1}{\epsilon} )$, where $r$ is the maximum cardinality of a feasible solution.
\end{theorem}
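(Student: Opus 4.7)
The first step of the plan is to quantify how $\phi$ changes under the single swap $S' \gets S \setminus \{a_i : i \in J_b\} + b$ performed by the inner loop. Using the definition of $w_a$ together with the submodularity of $f$, one obtains $f(S') - f(S) \geq w_b - \sum_{i \in J_b} w_{a_i}$, while $\gamma(S') - \gamma(S) = \gamma_b - \sum_{i \in J_b} \gamma_{a_i}$ by linearity. Plugging these into the algebraic identity
\[
\phi(S) - \phi(S') \;=\; \frac{(k+1)\,(f(S') - f(S)) \;-\; \phi(S)\,(\gamma(S') - \gamma(S))}{1 - \gamma(S')},
\]
and using the explicit form of $\delta_a$ in (\ref{eq:delta-a}), the expression telescopes to
\[
\phi(S) - \phi(S') \;\geq\; \frac{\delta_b \;-\; \sum_{i \in J_b} \delta_{a_i}}{(1-\gamma(S))(1-\gamma(S'))}.
\]
The definition of $\delta_a$ is engineered precisely to make this cancellation occur, which justifies why \AlgBarrier selects $b$ so as to maximize $\delta_b - \sum_i \delta_{a_i}$.

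\textbf{Matroid exchange and a lower bound against $\Optset$.} The second step is to show that the maximum over swaps is at least a $1/r$ fraction of $(1-\gamma(S))^2\phi(S)$. By a Brualdi-type matroid exchange argument, for each matroid $\cM_i$ there is an injection $\pi_i$ that maps each $b^* \in \Optset \setminus S$ with $S + b^* \notin \cI_i$ to some $\pi_i(b^*) \in S$ with $S - \pi_i(b^*) + b^* \in \cI_i$; because the removal step of the algorithm maintains $\delta_a \geq 0$ on $S$ (the lemma promised in the text after (\ref{eq:delta-a})), each $a \in S$ is hit at most $k$ times across the $\pi_i$'s. Averaging over $b^* \in \Optset \setminus S$ and substituting the closed-form identity $\sum_{a \in S}\delta_a = (k+1)f(S) - \Omega\gamma(S)$, the submodular lower bound $\sum_{b^* \in \Optset \setminus S} w_{b^*} \geq f(\Optset) - f(S) \geq \Omega - f(S)$, and the knapsack-feasibility $\sum_{b^* \in \Optset} \gamma_{b^*} \leq \ell = k$, a direct calculation collapses the right-hand side to
\[
\max_b \Big[\delta_b - \sum_{i \in J_b} \delta_{a_i}\Big] \;\geq\; \frac{(1-\gamma(S))^2\,\phi(S)}{r}.
\]
Combined with the swap-decrease formula, this gives a recurrence that drives $\phi$ geometrically to $0$ and shows that after $r\log(1/\epsilon)$ iterations either the early-exit condition $f(S) \geq (1-\epsilon)\Omega/(k+1)$ has triggered or $\phi(S) \leq \epsilon\Omega$, which in the regime $\gamma(S) \in [0,1)$ again implies $f(S) \geq (1-\epsilon)\Omega/(k+1)$.

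\textbf{From the potential bound to the approximation guarantee.} With the standard guessing argument producing an $\Omega \in \Lambda$ satisfying $(1-\epsilon)\Opt \leq \Omega \leq \Opt$, I would finish by a case split on the algorithm's output. If $S$ is returned through \cref{line:return-S} (all knapsacks satisfied), then $f(S) \geq (1-\epsilon)^2 \Opt/(k+1)$, strictly better than the claimed $2(k+1+\epsilon)$. If $S$ is infeasible and the algorithm exits through \cref{line:return-S-b}, submodularity yields $f(\{b\}) + f(S - b) \geq f(S) \geq (1-\epsilon)\Omega/(k+1)$, so $\max\{f(\{b\}), f(S - b)\} \geq \Opt/(2(k+1+O(\epsilon)))$, the claimed ratio. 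Feasibility of $S - b$ follows from the fact that $b$ is the last element added: before this addition the invariant $\gamma(\cdot) < 1$ implied $c_i(S - b) < 1$ for every knapsack, and the singleton $\{b\}$ is feasible by the standard WLOG assumption that each element fits alone into every knapsack.

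\textbf{Running time, extension, and main obstacle.} For each of the $O(\log r/\epsilon)$ guesses of $\Omega$, the while-loop runs at most $r\log(1/\epsilon)$ iterations. Each iteration recomputes all $\delta_a$'s in $O(n)$ function evaluations and then, for every candidate $b \in \cN \setminus S$, picks the exchange element $a_i$ of smallest $\delta_{a_i}$ per violated matroid $i \in J_b$ by scanning $S$ with matroid-membership queries in $O(r)$ time, giving a per-iteration cost of $O(nr)$. Multiplying out yields the claimed $O(nr^2 \log r \log(1/\epsilon)/\epsilon)$ bound. The extension to $k$-matchoids is immediate: replace $J_b$ with the $\leq k$ local matroids containing $b$ whose independence is broken by adding $b$, and the ``each $a$ is hit at most $k$ times'' accounting in step two carries over unchanged. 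The hardest part I expect is the second step: producing the correct injections $\pi_i$ and then carrying out the bookkeeping of $\delta$-values so that the combination of the submodularity bound, the identity for $\sum_a \delta_a$, and the knapsack constraint collapses exactly to $(1-\gamma(S))^2\phi(S)$—the cancellations depend crucially on the precise coefficients defining $\delta_a$ and on having padded $\ell$ up to $k$.
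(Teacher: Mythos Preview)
Your plan tracks the paper's proof closely (Lemmas~\ref{lem:negative-delta}--\ref{lem:ls-iter}), and your use of the identity $\sum_{a\in S}\delta_a=(k+1)f(S)-\Omega\,\gamma(S)$ is a clean substitute for the paper's device of padding each $a\in S$ into exactly $k$ exchange sets $T_b$. Two steps are glossed over, however, and both matter.

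First, the swap inequality $\phi(S)-\phi(S')\geq(\delta_b-\sum_i\delta_{a_i})/[(1-\gamma(S))(1-\gamma(S'))]$ is only meaningful while $\gamma(S')<1$; you never argue that this invariant is maintained, and in fact it need not be. The paper treats the case $\gamma(S')\geq 1$ separately: $\gamma_b-\sum_i\gamma_{a_i}\geq 1-\gamma(S)$ together with $\delta_b-\sum_i\delta_{a_i}\geq 0$ forces $(k+1)(w_b-\sum_i w_{a_i})\geq\Omega-(k+1)f(S)$, hence $f(S')\geq\Omega/(k+1)$ and the while-loop exits immediately. This is precisely the branch that produces the possibly-infeasible $S$ your Line~\ref{line:return-S-b} argument then handles, but you need it \emph{inside} the iteration analysis, not only at the end.

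Second, the recurrence you obtain is $\phi(S')\leq\bigl(1-\tfrac{1}{r}\,\tfrac{1-\gamma(S)}{1-\gamma(S')}\bigr)\phi(S)$, which is not literally a geometric decrease: the factor can exceed $1-1/r$ whenever $\gamma$ drops between iterations (which it can, because of the $\delta_a\leq 0$ removals). The paper closes this with an AM--GM step on the telescoping product $\prod_{i=1}^{t}\frac{1-\gamma(S_{i-1})}{1-\gamma(S_i)}=\frac{1-\gamma(S_0)}{1-\gamma(S_t)}\geq 1$, yielding $\phi(S_t)\leq e^{-t/r}\Omega$. Both gaps are small and easily patched along these lines; with them filled, your argument coincides with the paper's.
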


\begin{proof}
We first prove that removing elements $a \in S$ with $\delta_a \leq 0$ could only decrease the potential function $\phi(S)$.

\begin{lemma} \label{lem:negative-delta}
Suppose that $S$ is a current solution such that $\gamma(S) < 1$ and $a \in S$ is such that $\delta_a \leq 0$.
Then if we define $S' = S-a$, we obtain a solution $S'$ such that $\gamma(S') < 1$ and $\phi(S') \leq \phi(S)$.
\end{lemma}

\begin{proof}
	First note that by removing an element, the total cost of knapsacks can only decrease, so we still have $\gamma(S') < 1$, as cost of elements is non-negative in all knapsacks.
	Consider the change in the potential function:
	\begin{align} \label{eq:negative-delta}
	\phi(S') - \phi(S) \nonumber 
	& = \frac{\Omega - (k+1) \cdot f(S')}{1 - \gamma(S')} - \frac{\Omega - (k+1)\cdot f(S)}{1 - \gamma(S)} & \textup{(From \cref{eq:potential})}  &\nonumber \\
	& = \dfrac{\left( (\Omega - (k+1) \cdot f(S')) \cdot (1 - \gamma(S))-(\Omega - (k+1) \cdot f(S)) \cdot (1 - \gamma(S')) \right)} {(1-\gamma(S)) \cdot (1-\gamma(S'))} &
	\end{align}
	By submodularity of function $f$, we have $f(S') = f(S-a) \geq f(S) - w_a$, as for $a \in S$, we have $w_a = f(S \cap [a]) - f(S \cap [a-1])$.
	Also, from the linearity of knapsack costs, we have $\gamma(S') = \gamma(S-a) = \gamma(S) - \gamma_a$.
	Therefore, by applying $f(S') \geq f(S) - w_a$ and $\gamma(S') = \gamma(S) - \gamma_a$ to the right side of \cref{eq:negative-delta}, we get:
	\begin{align*}
	\phi(S') - \phi(S) 
	& \leq \dfrac{\left((\Omega - (k+1) \cdot f(S) + (k+1) \cdot  w_a) \cdot (1 - \gamma(S))-(\Omega - (k+1) \cdot f(S)) \cdot  (1 - \gamma(S) + \gamma_a) \right)}{(1-\gamma(S)) \cdot (1-\gamma(S'))} & \\
	& =  \frac{( (k+1) \cdot w_a \cdot (1 - \gamma(S)) - (\Omega - (k+1) \cdot f(S)) \gamma_a )}{(1-\gamma(S)) \cdot (1-\gamma(S'))} & \\
	& =  \frac{ \delta_a}{(1-\gamma(S)) \cdot (1-\gamma(S'))} \leq 0 \enspace. &\mspace{-2000mu} (\delta_{a} \leq 0  \textrm{ and } \gamma(S) \leq 1)
	\end{align*}
\end{proof}

After removing all elements $a \in S$ with $\delta_a \leq 0$, we obtain a new solution $S$ such that $\delta_a > 0$ for all $a \in S$. 
In the next step, we require to include a new element in order to decrease the potential function the most. The following lemma provides an algorithmic procedure to achieve this goal. Recall that we denote the $i$-th matroid constraint by $\cM_i = (N, \cI_i)$.

\begin{lemma} \label{lem:b-swap}
Assume $\Opt = f(\Optset) \geq \Omega$, and $S$ is the current solution such that $S \in \cap_{i=1}^{k} \cI_i$, $f(S) < \frac{1}{k+1} \Omega$, and $\gamma(S) < 1$.
Assume that for each $a \in S$, $\delta_a > 0$. Given $b \notin S$, let $J_b = \{ i \in [k]: S+b \notin \cI_i \}$,
and $a_i(b) = \argmin \{ \delta_a: a \in S \textrm{ and } S-a+b \in \cI_i \}$ for each $i \in J_b$.
Then there is $b \notin S$ such that
\[ \delta_b - \sum_{i \in J_b} \delta_{a_i(b)} \geq \frac{1}{|\Optset|} \cdot  (1 - \gamma(S))\cdot  (\Omega - (k+1)\cdot  f(S)) \enspace . \]
\end{lemma}

\begin{proof}
	To prove this lemma, we first state the following well-known result for exchange properties of matroids.
	\begin{lemma}[\citep{schrijver2003combinatorial}, Corollary 39.12a] \label{lem:matchingBases}
		Let $\cM=(\cN, \cI)$ be a matroid and let $S, T \in \cI$ with $|S| = |T|$. Then there is a perfect matching  $\pi$ between $S\setminus T$ and $T\setminus S$ such that for every $e \in S \setminus T$, the set $(S \setminus \{e\}) \cup \{ \pi(e) \} $ is an independent set.
	\end{lemma}
	
	Let $\Optset$ be an optimal solution with $\Opt = f(\Optset) \geq \Omega$. Let us assume that $\widetilde{S}_i, \widetilde{\Optset}_i$ are bases of $\cM_i$ containing $S$ and $\Optset$, respectively. By Lemma~\ref{lem:matchingBases}, there is a perfect matching $\Pi_i$ between $\tilde{S}_i \setminus \widetilde{\Optset}_i$ and $\widetilde{\Optset}_i \setminus \widetilde{S}_i$ such that for any $e \in \Pi_i$, $\widetilde{S}_i \Delta e \in \cI_i$.
	For each $b \in \Optset$ and $i \in J_b$ (defined as above, $J_b$ denotes the matroids in which we cannot add $b$ without removing something from $S$), let $\pi_i(b)$ denote the endpoint in $S$ of the edge matching $b$ in $\Pi_i$. This means that $S-\pi_i(b)+b \in \cI_i$.
	
	Since for each $i \in J_b$, we pick $a_i(b)$ to be an element of $S$ minimizing $\delta_a$ subject to the condition $S-a+b \in \cI_i$, and $\pi_i(b)$ is a possible candidate for $a_i$, we have $\delta_{a_i(b)} \leq \delta_{\pi_i(b)}$. Consequently, it is sufficient to bound $\delta_b - \sum_{i \in J_b} \delta_{\pi_i(b)}$ to prove the lemma.
	
	Since each $a \in S$ is matched exactly once in each matching $\Pi_i$, we obtain that each $a \in S$ appears as $\pi_i(b)$ at most $k$ times for different $i \in [k]$ and $b \in \Optset$. 
	Note that it could appear less than $k$ times due to the fact that it might be matched to elements in $\widetilde{\Optset}_i \setminus \Optset$.
	Let us define $T_b$ for each $b \in \Optset$ to contain $\{ \pi_i(b): i \in J_b \}$ plus some arbitrary additional elements of $S$, so that each element of $S$ appears in {\em exactly} $k$ sets $T_b$. Since $\delta_a > 0$ for all $a \in S$, we have
	\[ \delta_b - \sum_{a \in T_b} \delta_a \leq \delta_b - \sum_{i \in J_b} \delta_{\pi_i(b)} \leq \delta_b - \sum_{i \in J_b} \delta_{a_i(b)} \enspace.\]
	Hence it is sufficient to prove that $\delta_b - \sum_{a \in T_b} \delta_a \geq \frac{1}{|\Optset|} (1 - \gamma(S)) (\Omega - (k+1) f(S))$ for some $b \in \Optset$.
	Let us choose a random $b \in \Optset$ and compute the expectation $\E[\delta_b - \sum_{a \in T_b} \delta_a]$. First, since each element of $\Optset$ is chosen with probability $\frac{1}{|\Optset|}$, we obtain
	\begin{align*}
	\E[w_b] = \frac{\sum_{b \in \Optset} w_b}{|\Optset|}  = \frac{\sum_{b \in \Optset} f_S(b)}{|\Optset|} 
	\geq \frac{f_S(\Optset)}{|\Optset|} \geq \frac{(\Omega - f(S))}{|\Optset|} \enspace,
	\end{align*}
	by submodularity. Similarly, since $\Optset$ is a feasible solution, we have
	\[ \E[\gamma_b]= \frac{1}{|\Optset|} \sum_{b \in \Optset} \gamma_b \leq \frac{k}{|\Optset|} \enspace .\]
	Concerning the contribution of the items in $T_b$, we obtain,
	\begin{align*}
	\E[\sum_{a \in T_b} w_a] ={} \frac{1}{|\Optset|} \sum_{b \in \Optset} \sum_{a \in T_b} w_{a} 
	={}  \frac{k}{|\Optset|} \sum_{a \in S} w_a
	= \frac{k}{|\Optset|} \cdot f(S) \enspace,    
	\end{align*}
	using the fact that each $a \in S$ appears in exactly $k$ sets $T_b$. Similarly,
	\[ \E[\sum_{a \in T_b} \gamma_{a}]  = \frac{1}{|\Optset|} \sum_{b \in \Optset} \sum_{a \in T_b} \gamma_{a} = \frac{k}{|\Optset|} \cdot \gamma(S) \enspace . \]
	All together, we obtain
	\begin{align*}
	\E[\delta_b - \sum_{a \in T_b} \delta_a] 
	& =  \E \left[(k+1)\cdot (1 - \gamma(S)) \cdot(w_b - \sum_{a \in T_b} w_{a}) 
	- (\Omega - (k+1)\cdot f(S)) \cdot (\gamma_b - \sum_{a \in T_b} \gamma_{a_i}) \right] \\
	& \geq  \frac{k+1}{|\Optset|} \cdot \left(1 - \gamma(S)) \cdot(\Omega - f(S) - k \cdot f(S)\right) - \frac{1}{|\Optset|} \cdot \left(\Omega - (k+1) \cdot f(S)) \cdot (k - k \cdot \gamma(S) \right)  \\
	& = \frac{1}{|\Optset|} \cdot (1 - \gamma(S)) \cdot (\Omega - (k+1) \cdot f(S)) \enspace.
	\end{align*}
	Since the expectation is at least $\frac{1}{|\Optset|} \cdot (1 - \gamma(S))\cdot (\Omega - (k+1) \cdot f(S))$, there must exist an element $b \in \Optset$ for which the expression is at least the same amount, which proves the lemma.
\end{proof}

Now, we bound the maximum required number of iterations to converge to a solution whose value is sufficiently high.
Let $r = |\Optset|$ and  $\Opt = f(\Optset)$ for the optimal solution $\Optset$.
In Algorithm~\ref{alg:matroids+knapsacks}, we start from $S = \emptyset$ and repeat the following: As long as $\delta_a < 0$ for some $a \in S$, we remove $a$ from $S$.
If there is no such $a \in S$, we find $b \notin S$ such that $ \delta_b - \sum_{i \in J_b} \delta_{a_i(b)} \geq \frac{1}{|\Optset|} (1 - \gamma(S)) (\Omega - (k+1) f(S))$ (see Lemma~\ref{lem:b-swap}); we include element $b$ in $S$ and remove set $J_b$ from $S$.

\begin{lemma} \label{lem:ls-iter}
\AlgBarrier, after at most $ r \log (1/\epsilon)$ iterations, returns a set $S$ such that $f(S) > \frac{1-\epsilon}{k+1} \Omega$. Furthermore, at least one of the two sets $S$ or $S-b$ is feasible, where $b$ is the last element added to $S$.
\end{lemma}

\begin{proof}
	At the beginning of the process, we have $\phi(\emptyset) = \Omega$. Our goal is to show that $\phi(S)$ decreases sufficiently fast, while we keep the invariant $0 \leq \gamma(S) < 1$. 
	
	We know that, from the result of Lemma~\ref{lem:negative-delta}, removing elements $a \in S$ with $\delta_a \leq 0$ can only decrease the value of $\phi(S)$.
	We ignore the possible gain from these steps. When we include a new element $b$ and remove $\{ a_i(b): i \in J_b \}$ from $S$, we get from Lemma~\ref{lem:b-swap}:
	\[ \delta_b - \sum_{i \in J_b} \delta_{a_i(b)} \geq \frac{1}{|\Optset|} \cdot (1 - \gamma(S)) \cdot (\Omega - (k+1) \cdot f(S)) \enspace .\]
	Next, let us relate this to the change in $\phi(S)$. We denote the modified set by $S' = (S+b) \setminus \{ a_i(b): i \in J_b \}$.
	First, by submodularity and the definition of $w_a$, we know that
	\[ f(S') \geq f(S) + w_b - \sum_{i \in J_b} w_{a_i(b)} \enspace .\]
	We also have
	\[ \gamma(S') = \gamma(S) + \gamma_b - \sum_{i \in J_b} \gamma_{a_i(b)} \enspace . \]
	
	First, let us consider what happens when $\gamma(S') \geq 1$. This means that $\gamma_b - \sum_{i \in J_b} \gamma_{a_i(b)} \geq 1 - \gamma(S)$. Since we know that $\delta_b - \sum_{i \in J_b} \delta_{a_i(b)} \geq 0$, this means (by the definitions of $\delta_b$ and $\delta_{a_i(b)}$) that
	\[(k+1)\cdot (w_b - \sum_{i \in J_b} w_{a_i(b)}) \geq \Omega - (k+1) \cdot f(S) \enspace .\]
	In other words, $f(S') \geq f(S) + w_b - \sum_{i \in J_b} w_{a_i} \geq \frac{1}{k+1} \Omega$. Note that $S'$ might be infeasible,
	but $S'-b$ is feasible (since $S$ was feasible), so in this case we are done.
	
	In the following, we assume that $\gamma(S') < 1$. Then the potential change is
	\begin{align*}
	\phi(S')  - \phi(S)  &
	 \leq \Bigg( \dfrac{\left (\Omega - (k+1) \cdot (f(S) + w_b - \sum_{i \in J_b} w_{a_i(b)})\right) \cdot (1 - \gamma(S)) }{(1-\gamma(S))\cdot(1-\gamma(S'))} \\
	 & \mspace{150mu}- \dfrac{  (\Omega - (k+1) \cdot  f(S))  \cdot (1 - \gamma(S) - \gamma_b + \sum_{i \in J_b} \gamma_{a_i(b)} )) }{(1-\gamma(S))\cdot(1-\gamma(S'))}  \Bigg)\\
	& = \dfrac{\left((k+1) \cdot (-w_b + \sum_{i \in J_b} w_{a_i(b)}) \cdot (1 - \gamma(S))-(\Omega - (k+1) \cdot f(S)) \cdot (- \gamma_b + \sum_{i \in J_b} \gamma_{a_i(b)})\right)}{(1-\gamma(S))\cdot(1-\gamma(S'))} \\
	& =  \frac{(-\delta_b + \sum_{i \in J_b} \delta_{a_i(b)})}{(1-\gamma(S))\cdot (1-\gamma(S'))}   
	\leq  -\frac{1}{|\Optset|} \frac{\Omega - (k+1)\cdot f(S)}{1 - \gamma(S')}  \\
	& =  -\frac{1}{r} \frac{1-\gamma(S)}{1-\gamma(S')} \phi(S) \enspace,	\end{align*}
	using Lemma~\ref{lem:b-swap}. We infer that
	\begin{align*}
	\phi(S') \leq  \left( 1 - \frac{1}{r} \cdot \frac{1-\gamma(S)}{1-\gamma(S')} \right) \phi(S) \enspace.
	\end{align*}
	By induction, if we denote by $S_t$ the solution after $t$ iterations,
	\begin{align*}
	\phi(S_t)
	& \leq  \prod_{i=1}^{t} \left( 1 - \frac{1}{r} \cdot \frac{1-\gamma(S_{i-1})}{1-\gamma(S_i)} \right) \phi(S_0) 
	\leq  e^{-\frac{1}{r} \sum_{i=1}^{t}  \frac{1-\gamma(S_{i-1})}{1-\gamma(S_i)}} \phi(S_0) \enspace.
	\end{align*}
	Here, we use the arithmetic-geometric-mean inequality: 
	\begin{align*}
	\frac{1}{t} \sum_{i=1}^{t} \frac{1-\gamma(S_{i-1})}{1-\gamma(S_i)}
	\geq \left( \prod_{i=1}^{t} \frac{1-\gamma(S_{i-1})}{1-\gamma(S_i)} \right)^{1/t} 
	= \left( \frac{1-\gamma(S_0)}{1-\gamma(S_t)} \right)^{1/r} \geq 1 \enspace.
	\end{align*}
	Therefore, we can upper bound the potential function at the iteration $t$:
	\begin{align*}
	\phi(S_t)
	\leq  e^{-\frac{t}{r} \cdot \frac{1}{t} \sum_{i=1}^{t}  \frac{1-\gamma(S_{i-1})}{1-\gamma(S_i)}} \phi(S_0)
	\leq e^{-\frac{t}{r}} \phi(S_0) = e^{-\frac{t}{r}} \Omega \enspace.
	\end{align*}
	For $t = r \log \frac{1}{\epsilon}$, we obtain $\phi(S_t) = \frac{\Omega - (k+1)\cdot  f(S_t)}{1-\gamma(S_t)}  \leq \epsilon \Omega$ (and $0 \leq \gamma(S_t) < 1$), which implies $f(S_t) \geq \frac{1-\epsilon}{k+1} \Omega$.
\end{proof}

 Now, we have all the required material to prove Theorem~\ref{thm:matroids+knapsacks}.

\paragraph{Proof of Theorem~\ref{thm:matroids+knapsacks}}
The for loop for estimating $\Opt$ is repeated $\frac{1}{\epsilon} \log r$ times. Consider the value of $\Omega$ such that $(1-\epsilon) OPT \leq \Omega \leq OPT$. 
We perform the local search procedure: In each iteration, we check all possible candidates $b \in \cN \setminus S$ and find the best swap $a_i$ for each matroid $\cM_i$ where a swap is needed (the set of indices $J_b$). This requires checking the membership oracles for $\cM_i$ and the values $\delta_{a_i}$ for each potential swap. This takes $O(r n)$ steps. Note that assume $k$ to be a constant, but generally, it contributes only to the multiplicative constant rather than the degree of the polynomial. Finally, we choose the elements $b \notin S$ and $a_i \in S$ so that $\delta_b - \sum_{i \in J_b} \delta_{a_i}$ is maximized.  Due to Lemma~\ref{lem:b-swap}, the best swap satisfies $\delta_b - \sum_{i \in J_b} \delta_{a_i} \geq \frac{1}{r} \cdot (1 - \gamma(S)) \cdot (\Omega - (k+1) \cdot f(S))$. Following this swap, we need to recompute the values of $\delta_a$ for $a \in S$ and remove all elements with $\delta \leq 0$. Considering Lemma~\ref{lem:ls-iter}, this is sufficient to prove that we terminate within $O(r \log \frac{1}{\epsilon})$ iterations of the local search procedure.
Therefore, the algorithm terminates within running time $O(\frac{n r^2}{\epsilon} \log r \log \frac{1}{\epsilon})$. In the end, we have a set $S$ such that $f(S) \geq \frac{1-\epsilon}{k+1} \Omega$ (as the result of Lemma~\ref{lem:ls-iter}). It is possible that $S$ is infeasible, but both $S-b$ and $b$ are feasible (where $b$ is the last-added element), and by submodularity one of them has an objective value of at least $\frac{1-\epsilon}{2k+2} \Omega$.
\end{proof}

\subsection{The \AlgImproved Algorithm} \label{sec:AlgImproved}

In this section, we use an enumeration technique to improve the approximation factor of \AlgBarrier to $(k+1+\epsilon)$. 
For this reason, we propose the following modified algorithm: for each feasible pair of elements $\{a', a''\}$, define a reduced instance where the objective function $f$ is replaced by a monotone and submodular function $g(S) \triangleq f(S \cup \{a',a''\}) - f(\{a',a''\})$, and the knapsack capacities are decreased by $c_{i,a'} + c_{i,a''}$. 
In this reduced instance, we remove the two elements $a',a''$ and all elements $a \in \cN \setminus \{a',a'' \}$ with $g(\{ a \}) > \frac12 f(\{a',a''\})$ from the ground set $\cN$. 
Recall that the contraction of a matroid $\cM_i = (\cN_i, \cI_i)$ to a set $A$ is defined by a matroid $\cM'_i= (\cN \setminus A,\cI'_i)$ such that  $\cI'_i = \{ S \subseteq \cN \setminus A : S \cup A \in \cI_i\}$.
In the reduced instance, we consider contractions of all the $k$ matroids to set $\{a',a''\}$ as the new set of matroid constraints. Note that elements $a$ with $g(\{ a \}) > \frac12 f(\{a',a''\})$ are also removed from the ground set of these contracted matroids.
Then, to obtain a solution $S_{a',a''}$, we run Algorithm~\ref{alg:matroids+knapsacks} on the reduced instance. Finally, we return the best solution of $S_{a',a''} \cup \{a',a''\}$ over all feasible pairs $\{a',a''\}$.
Here, by construction, we are sure that all the solutions $S_{a',a''} \cup \{a',a''\}$ are feasible in the original set of constraints.
Note that,  for the final solution, if there is no feasible pair of elements, we just return the most valuable singleton.
The details of our algorithm (called \AlgImproved) are described in Algorithm~\ref{alg:improve}. 
Theorem~\ref{thm:matroids+knapsacks+} guarantees the performance of \AlgBarrier.

\begin{algorithm}[htb!]
	\caption{\AlgImproved}
	\label{alg:improve}
	\begin{algorithmic}[1]
		\INPUT $f:2^ \cN \rightarrow \bR_{\geq 0}$, membership oracles for $k$ matroids $\cM_1 = (\cN,\cI_1),\ldots,\cM_k=(\cN,\cI_k)$, and $\ell$ knapsack-cost functions $c_i: \cN \rightarrow [0,1]$.
		\OUTPUT A set $S \subseteq \cN  $ satisfying $S \in \bigcap_{i=1}^{k} \cI_i$ and $c_i(S) \leq 1 \ \forall i$.
		\FOR{each feasible pair of elements $\{a', a''\}$}
		\STATE $g(S)  \triangleq f(S \cup \{a',a''\}) - f(\{a',a''\}).$
		\STATE  Decrease  the knapsack capacities by $c_{i,a'} + c_{i,a''}.$
		\STATE Let $\cN' \gets \cN \setminus ( \{a',a''\} \cup \{a \mid g(a) > \frac12 f(\{a',a'' \}))$ and contracts all matroid constraints $\cM_i(\cN_i, \cI_i)$ by set $\{a',a''\}$.
		\STATE Run Algorithm~\ref{alg:matroids+knapsacks} on the reduced instance $g: 2^{\cN'} \to \bR_{\geq 0}$, to obtain a solution $S_{a',a''}$.
		\ENDFOR
		\RETURN the best of $S_{a',a''} \cup \{a',a''\}$ over all feasible pairs $\{a',a''\}$ (If there is no feasible pair of elements, just return the most valuable singleton).
	\end{algorithmic}
\end{algorithm}

\begin{theorem}
\label{thm:matroids+knapsacks+}
\AlgImproved (Algorithm~\ref{alg:improve}) provides a $(k+1+\epsilon)$-approximation for the problem of maximizing a monotone submodular function subject to the intersection of $k$ matroids and $\ell$ knapsack constraints (for $\ell \leq k$).
It also runs in time $O(\frac{n^3 r^2}{\epsilon} \log r \log \frac{1}{\epsilon} )$, where $r$ is the maximum cardinality of a feasible solution.
\end{theorem}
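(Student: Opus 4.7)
The plan is to use the enumeration trick (guessing two ``top'' elements of an optimal solution) to eliminate the factor of $2$ that \AlgBarrier loses in its infeasible branch. Guessing correctly ensures every remaining element contributes only a small $g$-marginal, so even the fallback $T - b$ (which is always feasible) captures almost all of the work done by \AlgBarrier.

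\textbf{The correct guess.} Fix an optimal solution $\Optset$ with value $\Opt$ and order its elements greedily: $o_i := \argmax_{o \in \Optset \setminus \{o_1,\dots,o_{i-1}\}} f(o \mid \{o_1,\dots,o_{i-1}\})$. Write $\Delta_i$ for the corresponding marginal. By submodularity $\Delta_1 \ge \Delta_2 \ge \cdots$, so setting $a':=o_1,\ a'':=o_2$ gives $f(\{a',a''\}) = \Delta_1 + \Delta_2$, and for every other $o \in \Optset$,
\[
g(\{o\}) \;=\; f(o \mid \{a',a''\}) \;\le\; \Delta_3 \;\le\; \Delta_2 \;\le\; \tfrac12(\Delta_1 + \Delta_2) \;=\; \tfrac12 f(\{a',a''\}).
\]
Hence $\Optset \setminus \{a',a''\} \subseteq \cN'$; and since matroid contraction to $\{a',a''\}$ preserves independence of $\Optset \setminus \{a',a''\}$ and the reduced knapsack budget $1 - c_i(\{a',a''\})$ exactly matches, the set $\Optset \setminus \{a',a''\}$ is feasible in the reduced instance. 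Thus $\Opt' \ge g(\Optset \setminus \{a',a''\}) = \Opt - f(\{a',a''\})$.

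\textbf{Applying Theorem~\ref{thm:matroids+knapsacks} to the reduced instance.} For this guess, \AlgImproved invokes \AlgBarrier on the reduced instance. By Theorem~\ref{thm:matroids+knapsacks}, for the correct $\Omega' \in \Lambda'$ with $(1-\epsilon)\Opt' \le \Omega' \le \Opt'$, \AlgBarrier produces a set $T$ with $g(T) \ge \frac{1-\epsilon}{k+1}\Omega' \ge \eta (\Opt - f(\{a',a''\}))$, where $\eta := (1-\epsilon)^2/(k+1)$. In the feasible branch ($T$ respects reduced knapsacks), $S_{a',a''}=T$, giving
\[
f(T \cup \{a',a''\}) \;=\; g(T) + f(\{a',a''\}) \;\ge\; \eta\,\Opt + (1-\eta)\,f(\{a',a''\}) \;\ge\; \eta\,\Opt.
\]
In the infeasible branch, $S_{a',a''}$ is at least as good as $T - b$, where $b$ is the last-added element; since $b \in \cN'$, the filter guarantees $g(\{b\}) \le \tfrac12 f(\{a',a''\})$, and submodularity gives $g(T-b) \ge g(T) - g(\{b\})$, so
\[
f((T-b) \cup \{a',a''\}) \;\ge\; \eta\,\Opt + \bigl(\tfrac12 - \eta\bigr)\,f(\{a',a''\}) \;\ge\; \eta\,\Opt,
\]
using $\eta \le \tfrac12$ which holds for $k \ge 1$ and small $\epsilon$. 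The set $T-b$ is feasible because the invariant $\gamma(S)<1$ was maintained before the final swap that added $b$.

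\textbf{Conclusion and running time.} Since Algorithm~\ref{alg:improve} returns the maximum over all feasible pairs, the output value is at least $\eta\,\Opt$. Rescaling (running \AlgBarrier with, e.g., $\epsilon' = \epsilon/(2(k+1))$) turns $1/\eta = (k+1)/(1-\epsilon')^2$ into $k+1+\epsilon$. The corner cases $|\Optset|\le 1$ and ``no feasible pair'' are handled by the singleton fallback. For the runtime, there are at most $O(n^2)$ feasible pairs; each invocation of \AlgBarrier costs $O(\tfrac{n r^2}{\epsilon} \log r \log \tfrac{1}{\epsilon})$ by Theorem~\ref{thm:matroids+knapsacks}, giving the total $O(\tfrac{n^3 r^2}{\epsilon} \log r \log \tfrac{1}{\epsilon})$.

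\textbf{Main obstacle.} The delicate step is ensuring consistency of the reduced instance: (i) matroid contraction together with knapsack subtraction gives a bijection between reduced-feasible $S$ and original-feasible $S \cup \{a',a''\}$; (ii) the filter $g(\{a\}) > \tfrac12 f(\{a',a''\})$ does not remove any element of $\Optset \setminus \{a',a''\}$ under the correct guess (handled by the greedy-ordering argument above); and (iii) the fallback $T-b$ in the infeasible branch is a bona fide feasible set whose value combines cleanly with $f(\{a',a''\})$. Once (i)--(iii) are in place, the algebra reducing both branches to $\eta\,\Opt$ using $\eta \le \tfrac12$ (for $k\ge 1$) is routine.
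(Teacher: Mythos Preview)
Your proposal is correct and follows essentially the same route as the paper's proof: guess the first two elements $a',a''$ of a greedy ordering of $\Optset$, observe that the filter $g(\{a\}) \le \tfrac12 f(\{a',a''\})$ retains all of $\Optset\setminus\{a',a''\}$, apply the internal guarantee of \AlgBarrier to obtain $g(T)\ge \tfrac{1-\epsilon}{k+1}(\Opt-f(\{a',a''\}))$, and in the infeasible branch use $g(T-b)\ge g(T)-g(\{b\})$ together with $g(\{b\})\le \tfrac12 f(\{a',a''\})$ so that adding back $\{a',a''\}$ recovers $\tfrac{1-\epsilon}{k+1}\Opt$ without the factor~$2$. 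One small remark: the bound $g(T)\ge \tfrac{1-\epsilon}{k+1}\Omega'$ that you use is really the content of Lemma~\ref{lem:ls-iter} (the pre-split guarantee inside \AlgBarrier), not the statement of Theorem~\ref{thm:matroids+knapsacks} itself, so you may want to cite that lemma directly.
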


\begin{proof}
	Since we enumerate over $O(n^2)$ pairs of elements, the running time is $O(n^2)$ times the running time of Algorithm~\ref{alg:matroids+knapsacks}.
	
	Consider an optimal solution $\Optset$ and a greedy ordering of its elements with respect to $f$. 
	Also, consider the run of the algorithm, when $a',a''$ are the first two elements of $\Optset$ in the greedy ordering. 
	Note that if all optimal solutions have only one element, it means there is no feasible pair, due to the monotonicity of $f$. 
	In this case, we just return the best singleton, which is optimal. 
	All elements of $\Optset$ following $a',a''$ in the greedy ordering have a marginal value of at most $\frac12 f(\{a',a''\})$, by the greedy choice of $a',a''$. 
	Therefore, these elements are still present in the reduced instance.
	Furthermore, since $\Optset \setminus \{a',a''\}$ is a feasible solution in the reduced instance,
	Algorithm~\ref{alg:matroids+knapsacks} always finds a solution: if the produced set $S$ by Algorithm~\ref{alg:matroids+knapsacks} is feasible, then the solution is returned at Line~\ref{line:return-S} of that algorithm with a guarantee:
	\[g(S) \geq \frac{1-\epsilon}{k+1} \cdot g(\Optset \setminus \{a',a''\}) = \frac{1-\epsilon}{k+1} \cdot (OPT - f(\{a',a''\})) \enspace. \]
	However, the set $S$ could be potentially infeasible and the solution then is returned at Line~\ref{line:return-S-b} of Algorithm~\ref{alg:matroids+knapsacks}. 
	In this case, we know that $S-b$ is feasible in the reduced instance where $b$ is the last-added element, and hence $S-b+a'+a''$ is feasible in the original instance. 
	Also, $g(b) \leq \frac12 f(\{a',a''\})$, otherwise $b$ would not be present in the reduced instance. 
	By submodularity, the value of $S-b+a'+a''$ is at least
	\begin{align*}
	f(S-b+a'+a'') & =   f(\{a',a''\}) + g(S-b) 
	\geq  f(\{a',a''\}) + g(S) - g(\{b\})\\
	& \geq f(\{a',a''\}) +\frac{1-\epsilon}{k+1} \cdot (OPT - f(\{a',a''\})) - \frac12 f(\{a',a''\}) \\
	&  \geq  \frac{1-\epsilon}{k+1} \cdot OPT \enspace.
	\end{align*}
	Since $S + a' + a''$ or $S-b+a'+a''$ is one of the considered solutions, we are done.
\end{proof}

\subsection{The Generalization to $k$-matchoids} \label{sec:k-matchoid}

In this section, we show that our algorithms could be extended to $k$-matchoids, a more general class of constraints.
To achieve this goal, we need to slightly modify the \AlgBarrier algorithm in order to make it suitable for the $k$-matchoid constraint.
More specifically, for each element $b \in S$, we use \AlgExchange to find a set $U_b \subseteq S$ such that $(S \setminus U_b) + b$ satisfies the $k$-matchoid constraint where exchanges are done with elements with the minimum values of $\delta_a$.
The pseudocode of \AlgExchange is given as Algorithm~\ref{alg:exchange_alg}.

\begin{algorithm}[htb!]
	\caption{\AlgExchange($S,b$)}
	\label{alg:exchange_alg}
		\begin{algorithmic}[1]
\STATE	Let $U \gets \varnothing$.
	\FOR{$i = 1$ \textbf{to} $m$}
		\IF{$(S + b) \cap \cN_i \not \in \cI_i$}
		\STATE	Let $A_i \gets \{a \in S \mid ((S - a + b) \cap \cN_i) \in \cI_\ell\}$.
		\STATE	Let $a_i \gets \arg \min_{a \in A_\ell} \delta_{a}$.
		\STATE	Add $a_i$ to $U$.
		\ENDIF
	\ENDFOR
	\RETURN $U$.
	\end{algorithmic}
\end{algorithm}

In order to guarantee the performance our proposed algorithms under the $k$-matchoid constraint, we provide the following lemma which is the equivalent of Lemma~\ref{lem:b-swap} for $k$-matchoid.

\begin{lemma} \label{lemma:matchoid}
Assume $\Opt = f(\Optset) \geq \Omega$, and $S$ is the
current solution that satisfies the $k$-matchoid constraint $\cM(\cN, \cI)$ with $f(S) < \frac{1}{k+1} \Omega$, and $\gamma(S) < 1$.
Then there is $b \notin S$ such that
\[ \delta_b - \sum_{i \in J_b} \delta_{a_i(b)} \geq \frac{1}{|\Optset|} (1 - \gamma(S)) (\Omega - (k+1) f(S)) \enspace . \]
\end{lemma}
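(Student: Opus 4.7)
The plan is to mirror the argument of Lemma~\ref{lem:b-swap}, replacing the $k$-matroid intersection exchange structure by the matchoid exchange structure. The only essential change is in the counting step, which now uses the matchoid property that each element of $\cN$ lies in at most $k$ of the ground sets $\cN_1,\ldots,\cN_m$.

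First, for each matroid $(\cN_i,\cI_i)$ in the matchoid representation I would extend $S\cap\cN_i$ to a base $\tilde S_i$ of $\cM_i$ and $\Optset\cap\cN_i$ to a base $\widetilde\Optset_i$, and then invoke Lemma~\ref{lem:matchingBases} to obtain a perfect matching $\Pi_i$ between $\tilde S_i\setminus\widetilde\Optset_i$ and $\widetilde\Optset_i\setminus\tilde S_i$ with the swap property. For each $b\in\Optset$ and each $i\in J_b$ the element $b$ lies in $\widetilde\Optset_i\setminus\tilde S_i$, so its partner $\pi_i(b)$ is well-defined and $(S-\pi_i(b)+b)\cap\cN_i\in\cI_i$; in particular $\pi_i(b)$ is a legal candidate inside \AlgExchange, so $\delta_{a_i(b)}\leq\delta_{\pi_i(b)}$.

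The key counting step is that, since each $a\in S$ lies in at most $k$ of the ground sets $\cN_i$, it is matched at most once in each relevant $\Pi_i$, hence appears as $\pi_i(b)$ for at most $k$ pairs $(b,i)$ in total. I would then define $T_b\supseteq\{\pi_i(b):i\in J_b\}$ exactly as in Lemma~\ref{lem:b-swap} and pad with arbitrary additional elements of $S$ so that every $a\in S$ occurs in exactly $k$ of the sets $T_b$; the counting above guarantees this is feasible because $\sum_b|J_b|\leq k|S|$. Using $\delta_a>0$ for $a\in S$ then yields $\delta_b-\sum_{i\in J_b}\delta_{a_i(b)}\geq\delta_b-\sum_{a\in T_b}\delta_a$, and averaging over a uniformly random $b\in\Optset$ with submodularity for $\E[w_b]$, the feasibility bound $\gamma(\Optset)\leq k$ for $\E[\gamma_b]$, and the exact-$k$ padding for $\E[\sum_{a\in T_b}w_a]$ and $\E[\sum_{a\in T_b}\gamma_a]$ reduces the expectation to the identical calculation carried out in Lemma~\ref{lem:b-swap}, giving
\[
\E\Bigl[\delta_b-\sum_{a\in T_b}\delta_a\Bigr]\ \geq\ \tfrac{1}{|\Optset|}(1-\gamma(S))(\Omega-(k+1)f(S)),
\]
so some $b\in\Optset$ attains the bound stated in the lemma.

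I expect the main technical obstacle to be the same subtlety already implicit in Lemma~\ref{lem:b-swap}: ensuring that the matching partner $\pi_i(b)$ actually lies in $S$ rather than in the auxiliary part $\tilde S_i\setminus(S\cap\cN_i)$ of the base extension. This can be handled either by choosing the extensions to maximize $|\tilde S_i\cap\widetilde\Optset_i|$, which forces the symmetric-difference elements to come from $(S\cap\cN_i)\mathbin{\triangle}(\Optset\cap\cN_i)$, or by working directly with the unique circuit in $(S\cap\cN_i)+b$ to guarantee the exchanged element lives in $S$. Beyond this bookkeeping everything carries over verbatim, because the matchoid multiplicity bound of $k$ plays exactly the same role in the counting as the number of matroids did in the $k$-matroid intersection argument.
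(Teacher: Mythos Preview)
Your proposal is correct and follows essentially the same route as the paper. The only cosmetic difference is that the paper first pads the matchoid so that every element belongs to \emph{exactly} $k$ of the ground sets $\cN_i$ (adding each element to dummy matroids as a free element), whereas you work directly with the ``at most $k$'' bound; both lead to the same counting argument. You are also more explicit than the paper about the $\pi_i(b)\in S$ subtlety, which the paper leaves implicit.
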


\begin{proof}
	For the sake of simplicity of the analysis, we assume that every element $a \in \cN$ belongs to exactly $k$ out of the $m$ ground sets $\cN_i$ ($i \in [m]$) of the
	matroids defining $\cN$.
	To make this assumption valid, for every element $a \in \cN$ that belongs to the ground sets of only $k' < k$ out of the $m$ matroids, we add $a$ to $k - k'$ additional matroids as an element whose addition to an independent set always keeps the set independent. 
	It is easy to observe that the addition of $a$ to these matroids does not affect the behavior of our Algorithms.
	
	Let us assume that $\widetilde{S}_i, \widetilde{\Optset}_i$ are bases of $\cM_i$ containing $S \cap \cN_i$ and $\Optset \cap \cN_i$, respectively. 
	By Lemma~\ref{lem:matchingBases}, there is a perfect matching $\Pi_i$ between $\tilde{S}_i \setminus \widetilde{\Optset}_i$ and $\widetilde{\Optset}_i \setminus \widetilde{S}_i$ such that for any $e \in \Pi_i$ we have $\widetilde{S}_i\Delta e \in \cI_i$.
	For each $b \in \Optset$ and $i \in J_b$ where we define $J_b = \{ i \in [m] \mid (S + b) \cap \cN_i \not \in \cI_i \}$, let $\pi_i(b)$ denote the endpoint in $S$ of the edge matching $b$ in $\Pi_i$. This means that $S-\pi_i(b)+b \in \cI_i$.
	Since for each $i \in J_b$, we pick $a_i(b)$ to be an element of $S$ minimizing $\delta_a$ subject to the condition $S-a+b \in \cI_i$, and $\pi_i(b)$ is a possible candidate for $a_i$, we have $\delta_{a_i(b)} \leq \delta_{\pi_i(b)}$. Consequently, it is sufficient to bound $\delta_b - \sum_{i \in J_b} \delta_{\pi_i(b)}$ to prove the lemma.
	Since each $a \in S$ is matched at most once in each matching $\Pi_i$, we obtain that each $a \in S$ appears as $\pi_i(b)$ at most $k$ times for different $i \in [m]$ and $b \in \Optset$. Note that it could appear less than $k$ times.
	We can then define $T_b$ for each $b \in \Optset$ to contain $\{ \pi_i(b): i \in J_b \}$ plus some arbitrary additional elements of $S$, so that each element of $S$ appears in exactly $k$ sets $T_b$. 
	By providing this exchange property for $k$-matchoids, the rest of the proof is exactly the same as proof of Lemma~\ref{lem:b-swap}.
\end{proof}

From the result of Lemma~\ref{lemma:matchoid} and Theorems~\ref{thm:matroids+knapsacks} and \ref{thm:matroids+knapsacks+}, we conclude the following corollaries for maximizing a monotone and submodular function subject to a $k$-matchoid and $\ell$ knapsack constraints. 

\begin{corollary} \label{cor:matchoid-approx}
\AlgBarrier (Algorithm~\ref{alg:matroids+knapsacks}) provides a $2(k+1+\epsilon)$-approximation for the problem of maximizing a monotone submodular function subject to $k$-matchoid and $\ell$ knapsack constraints (for $\ell \leq k$).
\end{corollary}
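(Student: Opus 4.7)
The plan is to mirror the proof of Theorem~\ref{thm:matroids+knapsacks} almost verbatim, replacing the per-matroid exchange search of \AlgBarrier with a call to \AlgExchange and invoking Lemma~\ref{lemma:matchoid} instead of Lemma~\ref{lem:b-swap}. First I would point out that the algorithm's modification is purely local: in the inner loop we still maintain a set $S$ that satisfies all $\ell$ knapsack constraints loosely (i.e.\ $\gamma(S) < 1$) and that lies in $\cI$ for the $k$-matchoid, and on each iteration we still choose an incoming element $b \notin S$ together with an exchange set $U_b \subseteq S$ (now produced by \AlgExchange$(S,b)$) maximizing $\delta_b - \sum_{a \in U_b} \delta_a$. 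Since each $U_b$ is chosen with $a_i = \arg\min_{a \in A_i} \delta_a$ over only those indices $i$ for which $(S+b)\cap \cN_i \notin \cI_i$, the feasibility of $(S \setminus U_b)+b$ in the $k$-matchoid is immediate from the definition of \AlgExchange.

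Next I would observe that Lemma~\ref{lem:negative-delta} makes no use of matroid structure at all---it is a direct computation from the definitions of $\phi$ and $\delta_a$ combined with submodularity---so it applies unchanged, justifying the cleanup step that drops elements with $\delta_a \le 0$. Lemma~\ref{lemma:matchoid} then supplies precisely the same lower bound
\[
\delta_b - \sum_{i \in J_b} \delta_{a_i(b)} \;\geq\; \frac{1}{|\Optset|}\,(1-\gamma(S))\,(\Omega - (k+1)\,f(S))
\]
that drove the proof of Lemma~\ref{lem:ls-iter}; the only subtlety worth remarking on is that the padding trick used in the proof of Lemma~\ref{lemma:matchoid} (adding each element to extra ``trivial'' matroids until it lies in exactly $k$ of the $\cN_i$) preserves the $k$-matchoid structure without affecting $\cI$, so the $k$-counting argument behind $T_b$ goes through. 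With these two lemmas in hand, the potential-decrement calculation in the proof of Lemma~\ref{lem:ls-iter} is identical verbatim (it only manipulates $\phi$, $\delta_b$, $\delta_{a_i(b)}$, $\gamma$, and $w$), giving the same bound $\phi(S_t) \leq e^{-t/r}\Omega$ and hence $f(S_t) \ge \frac{1-\epsilon}{k+1}\Omega$ within $r\log(1/\epsilon)$ iterations.

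Finally I would close the argument exactly as in Theorem~\ref{thm:matroids+knapsacks}: enumerating $\Omega \in \Lambda$ costs a factor $O(\frac{1}{\epsilon}\log r)$, and for the $\Omega$ with $(1-\epsilon)\Opt \le \Omega \le \Opt$ we obtain a set $S$ with $f(S) \ge \frac{1-\epsilon}{k+1}\Omega$. If $S$ is already knapsack-feasible we return it; otherwise, since we maintain $\gamma(S) < 1$ up until the last insertion, both $S-b$ (knapsack-feasible, and still $k$-matchoid-feasible because \AlgExchange enforces it) and $\{b\}$ are feasible, and submodularity gives $\max\{f(S-b), f(\{b\})\} \ge \tfrac{1}{2} f(S) \ge \frac{1-\epsilon}{2(k+1)} \Opt$, matching the claimed $2(k+1+\epsilon)$-approximation.

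The main obstacle is really concentrated in Lemma~\ref{lemma:matchoid}, which has already been proved; once the matchoid analogue of the matroid exchange lemma is in place, every other step of the Theorem~\ref{thm:matroids+knapsacks} proof carries over without modification, so what remains for the corollary is essentially a careful bookkeeping argument to verify that no step secretly relied on $S$ being in the intersection of $k$ matroids rather than in a $k$-matchoid, and that \AlgExchange's output is a valid substitute for the per-matroid swap choices $\{a_i : i \in J_b\}$.
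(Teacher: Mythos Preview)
Your proposal is correct and mirrors the paper's approach exactly: the paper simply states that the corollary follows from Lemma~\ref{lemma:matchoid} together with Theorem~\ref{thm:matroids+knapsacks}, relying on the observation that once the matchoid analogue of Lemma~\ref{lem:b-swap} is in place, every remaining step (Lemma~\ref{lem:negative-delta}, Lemma~\ref{lem:ls-iter}, and the final feasibility/halving argument) carries over verbatim. Your write-up is in fact more explicit than the paper's, which provides no standalone proof for the corollary.
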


\begin{corollary} \label{cor:matchoid-approx-improved}
\AlgImproved (Algorithm~\ref{alg:improve}) provides a $(k+1+\epsilon)$-approximation for the problem of maximizing a monotone submodular function subject to $k$-matchoid and $\ell$ knapsack constraints (for $\ell \leq k$).
\end{corollary}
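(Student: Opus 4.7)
The plan is to mirror the proof of Theorem~\ref{thm:matroids+knapsacks+} almost verbatim, substituting the $k$-matchoid exchange guarantee (Lemma~\ref{lemma:matchoid}) for the $k$-matroid exchange guarantee (Lemma~\ref{lem:b-swap}) wherever the latter is invoked. The reason this substitution works is that the potential-function analysis in the proof of Theorem~\ref{thm:matroids+knapsacks} uses Lemma~\ref{lem:b-swap} only as a black box: given the current feasible set $S$ with $\gamma(S)<1$ and $f(S)<\tfrac{1}{k+1}\Omega$, there exists a swap producing a gain of at least $\tfrac{1}{|\Optset|}(1-\gamma(S))(\Omega-(k+1)f(S))$ in $\delta_b-\sum_{i\in J_b}\delta_{a_i(b)}$. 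Lemma~\ref{lemma:matchoid} supplies exactly this guarantee for the $k$-matchoid setting (with swaps produced by \AlgExchange), and Corollary~\ref{cor:matchoid-approx} packages the result: \AlgBarrier returns a set $S$ with $f(S)\geq\tfrac{1-\epsilon}{k+1}\Omega$ such that either $S$ or $S-b$ is feasible under the $k$-matchoid and $\ell$ knapsack constraints.

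Next, I would verify that the reduction used by \AlgImproved is legitimate for $k$-matchoids. Given a feasible pair $\{a',a''\}$, the reduced objective $g(S)=f(S\cup\{a',a''\})-f(\{a',a''\})$ is monotone and submodular, and the reduced knapsack capacities $1-c_{i,a'}-c_{i,a''}$ remain nonnegative by feasibility of the pair. For the $k$-matchoid constraint, I would contract each of the underlying matroids $(\cN_i,\cI_i)$ by $\{a',a''\}\cap\cN_i$: the resulting system is still an independence system on $\cN\setminus\{a',a''\}$ in which each element still belongs to at most $k$ of the (contracted) ground sets, hence it is again a $k$-matchoid. Feasibility of $S_{a',a''}\cup\{a',a''\}$ under the original $k$-matchoid then follows from the definition of matroid contraction. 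After this verification, the call to \AlgBarrier on the reduced instance is governed by Corollary~\ref{cor:matchoid-approx}.

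Having established these two facts, I would then reuse the enumeration argument of Theorem~\ref{thm:matroids+knapsacks+} unchanged. Let $\Optset$ be an optimum and order its elements greedily with respect to $f$; consider the iteration in which $\{a',a''\}$ are the first two elements of this ordering (if no feasible pair exists, the best singleton is optimal by monotonicity, and we return it). Every remaining element of $\Optset$ has a $g$-marginal of at most $\tfrac12 f(\{a',a''\})$, so these elements survive the filtering step, and $\Optset\setminus\{a',a''\}$ is feasible in the reduced instance. Applying Corollary~\ref{cor:matchoid-approx} to this reduced instance produces a set $S$ with $g(S)\geq\tfrac{1-\epsilon}{k+1}g(\Optset\setminus\{a',a''\})=\tfrac{1-\epsilon}{k+1}(\Opt-f(\{a',a''\}))$ when the direct return of \AlgBarrier is feasible, while if only $S-b$ is feasible (with $b$ the last-added element), the bound $g(\{b\})\leq\tfrac12 f(\{a',a''\})$ allows the same submodularity calculation as in Theorem~\ref{thm:matroids+knapsacks+} to give $f(S-b+a'+a'')\geq\tfrac{1-\epsilon}{k+1}\Opt$.

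The main obstacle I anticipate is the verification that contraction of the $m$ underlying matroids still produces a $k$-matchoid on $\cN\setminus\{a',a''\}$; this is a small but essential structural check because the value of $k$ in the approximation ratio must not increase. Once that is confirmed, every other step is a direct transcription of the proofs of Theorem~\ref{thm:matroids+knapsacks+} and Corollary~\ref{cor:matchoid-approx}, and the running time bound remains $O(\tfrac{n^3 r^2}{\epsilon}\log r\log\tfrac{1}{\epsilon})$ for the same reason as before, namely $O(n^2)$ pairs times the per-instance cost of \AlgBarrier.
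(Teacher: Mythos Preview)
Your proposal is correct and follows the same approach as the paper, which simply states that the corollary follows ``from the result of Lemma~\ref{lemma:matchoid} and Theorems~\ref{thm:matroids+knapsacks} and \ref{thm:matroids+knapsacks+}'' without spelling out the details. Your explicit verification that contracting each underlying matroid $(\cN_i,\cI_i)$ by $\{a',a''\}\cap\cN_i$ again yields a $k$-matchoid (since each surviving element still lies in at most $k$ of the ground sets) is exactly the structural check needed to make the transcription of Theorem~\ref{thm:matroids+knapsacks+} go through, and the paper leaves this implicit.
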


\section{A Heuristic Algorithm} \label{sec:heuristic}

In \cref{sec:matroids+knapsacks}, we proposed \AlgBarrier with the following
interesting property: it needs to consider only sets $S$ where the sum of all the $k$ knapsacks is at most 1 for them, i.e., sets $S$ such that $\gamma(S) = \sum_{i}^{k} \sum_{a \in S}c_{i,a} \leq 1$.
For scenarios with more than one knapsack, while \AlgBarrier theoretically produces a highly competitive objective value, there might be feasible solutions such that they fill the capacity of all  knapsacks, i.e., $\gamma(S)$ could be very close to $k$ for them.
Unfortunately, both our proposed algorithms fail to find these kinds of solutions.
In this section, inspired by our theoretical results, we design a heuristic algorithm (called \AlgHeuristic) that overcomes this issue. More specifically, this algorithm is very similar to \AlgBarrier with two slight modifications: 
(i) Instead of \cref{eq:delta-a}, we use a new formula to calculate the importance of an element $a$ with respect to the potential function:
\begin{align} \label{eq:new-delta-a}
	\delta_a = (k+1) \cdot (\lambda - \gamma(S))  \cdot w_{a} -(\Omega - (k+1) \cdot f(S)) \cdot \gamma_{a} \enspace,
\end{align}
where $1 \leq \lambda \leq k$. This modification allows us to include sets with $\gamma(S) > 1$ for the outcome of algorithms as $\delta_{a}$ could still be non-negative for them.
(ii) The \AlgBarrier is designed in a way such that for a solution $S$, we have $\gamma(S) \leq 1$. 
This fact consequently implies that the set $S$ satisfies all the knapsack constraints; therefore, by the algorithmic design, we can guarantee that knapsacks are not violated.
On the other hand, in \cref{eq:new-delta-a} for values $\lambda > 1$, set $S$ may violate one or more of the knapsack constraints. 
For this reason, we need to choose the element $b$ from a set $\cN'$ such that for all $b \in \cN'$ the set $(S \setminus U_b) + b$ is feasible; and if this set $\cN'$ is empty, i.e., there is no such element $b$, we stop the algorithm and return the solution (see Line~\ref{line:check-knaspack} of Algorithm~\ref{alg:heuristic})).
For the sake of completeness, we provide a detailed description of \AlgHeuristic in Algorithm~\ref{alg:heuristic}.

\begin{algorithm}[htb!]
	\caption{\AlgHeuristic}
	\label{alg:heuristic}
	\begin{algorithmic}[1]
		\INPUT $f:2^ \cN \rightarrow \bR_{\geq 0}$, membership oracles for a $k$-matchoid set system $(\cN, \cI)$, and $\ell$ knapsack-cost functions $c_i: \cN \rightarrow [0,1]$.
		\OUTPUT A set $S \subseteq \cN  $ satisfying $S \in \cI$ and $c_i(S) \leq \forall i$.
		\STATE $M \leftarrow \max_{j \in \cN} f(\{j\})$
		\STATE $\Lambda \gets \{(1+\epsilon)^{i} \mid \nicefrac{M}{(1+ 
			\epsilon)} \leq (1+\epsilon)^{i} \leq r M \}$ as potential estimates  of $\Opt$	
		\FOR{$\Omega  \in \Lambda$}
		\STATE $S \gets \emptyset$.
		\FOR {Iteraton number from $1$ \textbf{to} $r \log \frac{1}{\epsilon}$}
		\STATE $\cN' \gets \{ b \in \cN \setminus S \mid (S \setminus U_b) \cup \{b\} \textrm{ satisfies all } \ell \textrm{ knapsack constraints} \}$, where we have defined $U_b \gets \AlgExchange(S,b)$.
		\STATE \textbf{if} $\cN'=\emptyset$ \textbf{then} \textbf{break}. \label{line:check-knaspack}
		\STATE
		$b \gets \argmax_{b \in \cN'} \left( \delta_b - \sum_{a \in U_b} \delta_{a} \right)$ for $\delta_a = (k+1) \cdot (\lambda - \gamma(S)) \cdot w_{a} - (\Omega - (k+1) \cdot f(S)) \cdot \gamma_{a}$.
		\STATE $S \gets (S \setminus U_b) + b$, where $U_b \gets \AlgExchange(S,b)$.
		\STATE\textbf{while} {$\exists a \in S$ such that $\delta_a \leq 0$} \textbf{do} Remove $a$ from $S$.
		\ENDFOR
		\STATE$S_\Omega \gets S$
		\ENDFOR
		\RETURN $\argmax_{\Omega \in \Lambda} f(S_\Omega)$
	\end{algorithmic}
\end{algorithm}

\section{Experimental Results} \label{sec:experiments}

In this section, we compare the performance of our proposed algorithms with several baselines.
Our first baseline is the vanilla Greedy algorithm. 
It starts with an empty set $ S = \emptyset$ and keeps adding elements one by one greedily (according to their marginal gain) while the $k$-system and $\ell$-knapsack constraints are both satisfied.
Our second baseline, Density Greedy, starts with an empty set $S=\emptyset$ and keeps adding elements greedily by the ratio of their marginal gain to the total knapsack cost of each element (i.e., according to ratio $\nicefrac{f(a \mid S)}{\gamma_a}$ for $e \in \cN$) while the $k$-system and $\ell$-knapsack constraints are satisfied.
We also consider the state-of-the-art algorithm (called Fast) for maximizing monotone and submodular functions under a $k$ matroid constraints and $\ell$ knapsack constraints \cite{bv2014}.
This algorithm is a greedy-like algorithm
with respect to marginal values, while it discards all elements with a density below some threshold. This thresholding idea guarantees that the solution does not exceed the knapsack constraints without
reaching a high enough utility.
The Fast algorithm runs in time $O(\frac{n}{\epsilon^2} \log \frac{n}{\epsilon})$
provides a $(1 + \epsilon)(k + 2\ell + 1)$-approximation.

In \cref{sec:vertex-cover,sec:movie}, we compare the above algorithms on two tasks of vertex cover over real-world networks and video summarization subject to a set system and a single knapsack constraint. Then, in \cref{sec:yelp,sec:twitter,sec:Movielens}, we evaluate the performance of algorithms, respectively, on the Yelp location summarization, Twitter text summarization and movie recommendation applications subject to a set system and multiple knapsack constraints.
Note that the corresponding constraints are explained independently for each specific application.

In our evaluations,  we compare the algorithms based on two criteria: objective value and number of calls to the Oracle.
Our experimental evaluations demonstrate the following facts:
(i) the objective values of the \AlgBarrier algorithm (and also \AlgHeuristic for more than one knapsack) consistently outperform the baseline algorithms,
and (ii) the computational complexities of our proposed algorithms are quite competitive in practice.
Indeed, while the Fast algorithm provides a better computational guarantee, we observe that for several applications our algorithm exhibits a better performance (in terms of the number of calls to the Oracle) than Fast (see \cref{fig:graph-o-1912,fig:graph-o-email,fig:twitter-o-2,fig:yelp-o-k,fig:movie-o-k}).

\subsection{Vertex Cover} \label{sec:vertex-cover}

In this experiment, we compare \AlgBarrier with Greedy, Density Greedy and Fast.
We define a monotone and submodular function over vertices of a directed real-world graph $G = (V, E)$.
Let's $w : V \rightarrow \bR_{\geq 0}$ denotes a weight function on the vertices of graph $G$. 
For a given vertex set $S \subseteq V$, assume $N(S)$ is the set of vertices which are pointed to by $S$, i.e., $N(S) \triangleq \{v \in V \mid \exists u \in S \text{ such that } (u, v) \in E\}$. 
We define $f:2^{V} \to \bR_{\geq 0}$ as follows:
\[
f(S) = \sum_{u \in N(S) \cup S} w_{u}\enspace,
\]
and we assign to each vertex $u$ a weight of one.
In this set of experiments, our objective is to maximize function $f$ subject to the constraint that we have an upper limit $m$ on the total number of vertices we choose, as well as an upper limit $m_i$ on the number of vertices from each social communities. 
For the simplicity of our evaluations, we use a single value for all $m_i$.
This constraint is the intersection of a uniform matroid and a partition matroid.
To assign vertices to different communities, we use the Louvain method \cite{blondel2008fast}.\footnote{Available for download from: \url{https://sourceforge.net/projects/louvain/}}
In addition, for each graph, we reduce the total number of communities to five by merging smaller communities.
For a knapsack constraint $c$, we set the cost of each vertex $u$ as  $c(u) \propto 1 + \max\{0, d(u) - q\}$, where $d(u)$ is the out-degree of node $u$ in graph $G(V,E)$.
We normalize the costs such that the average cost of each element is $\nicefrac{1}{20}$, i.e., 
$\frac{\sum_{u \in V} c(u)}{|V|} = \nicefrac{1}{20}$. 
With this normalization, we expect the average size of the largest set which satisfies the knapsack constraint is roughly close to 20.
In our experiment, we use real-world graphs from \cite{snapnets} and run the algorithms for varying knapsack budgets. 
We also set $m = 15, m_i = 6$ and $q = 6$.

In \cref{fig:graph-cover}, we see the evaluations for two graphs: Facebook ego network and EU Email exchange network. 
From these experiments, it is evident that \AlgBarrier outperforms the other specialized algorithms for this problem in terms of both objective value and computational complexity.
We also observe that the performance of Greedy is slightly worse than Fast.
We should point out that the running times of Greedy and Density Greedy are the two smallest, as these two algorithms do not make any adjustments to make them suitable for the constraints of this application and obviously they do not provide any theoretical guarantees.

\begin{figure*}[htb!] 
	\centering  
	\subfloat[Facebook ego network] {\includegraphics[height=33.mm]{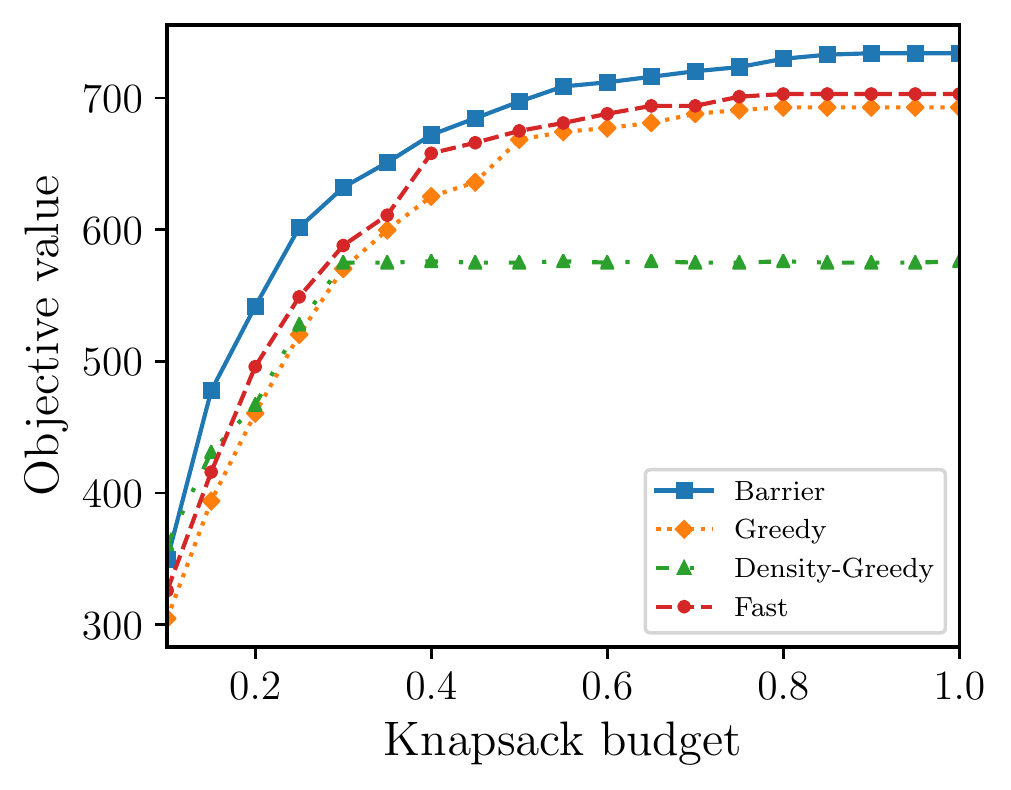}\label{fig:graph-f-1912}}
	\subfloat[EU Email]
	{\includegraphics[height=33.mm]{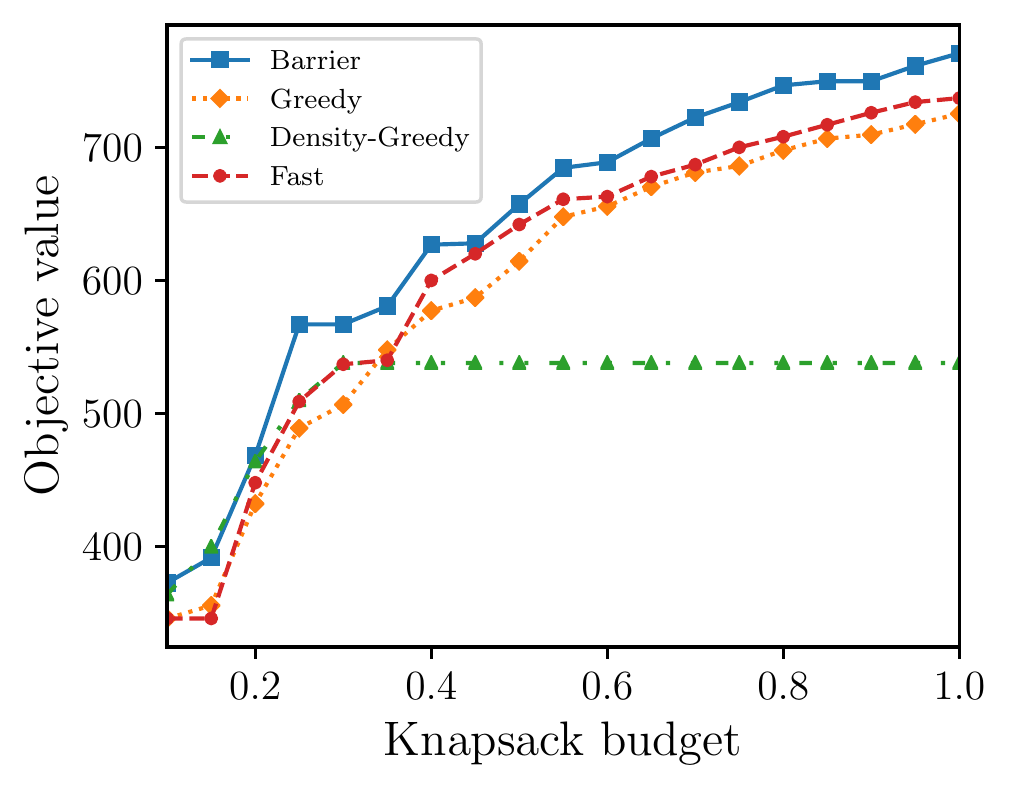}\label{fig:graph-f-email}}
	\subfloat[Facebook ego network]	{\includegraphics[height=33.mm]{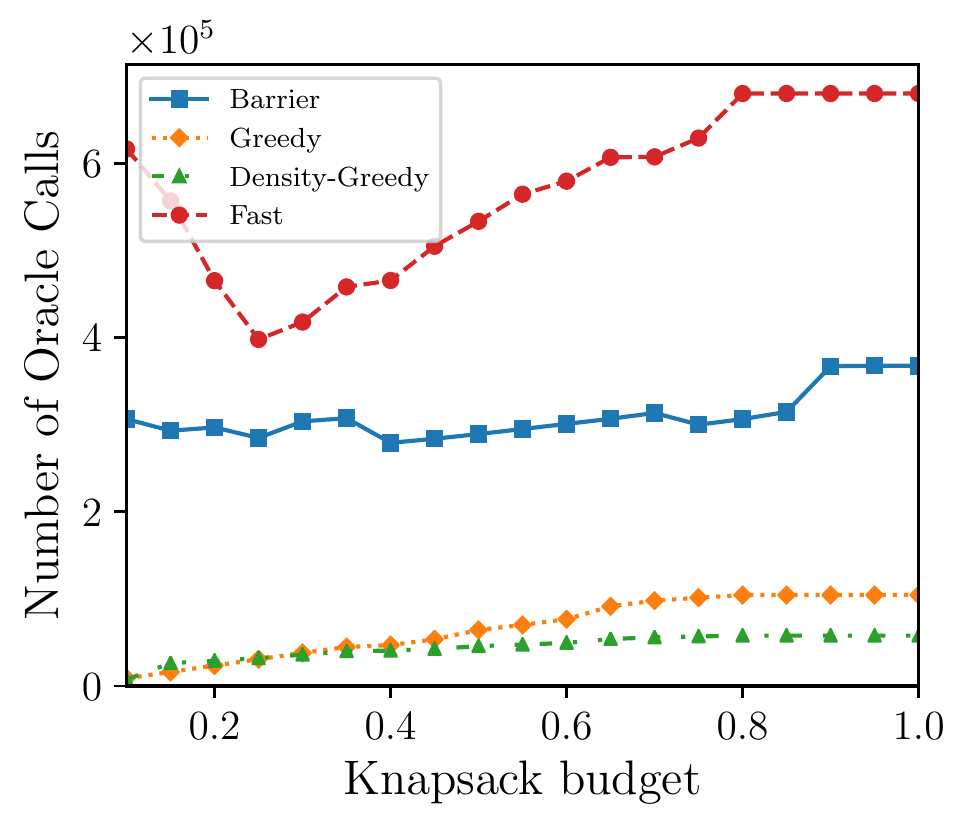}\label{fig:graph-o-1912}}
	\subfloat[EU Email]	{\includegraphics[height=33.mm]{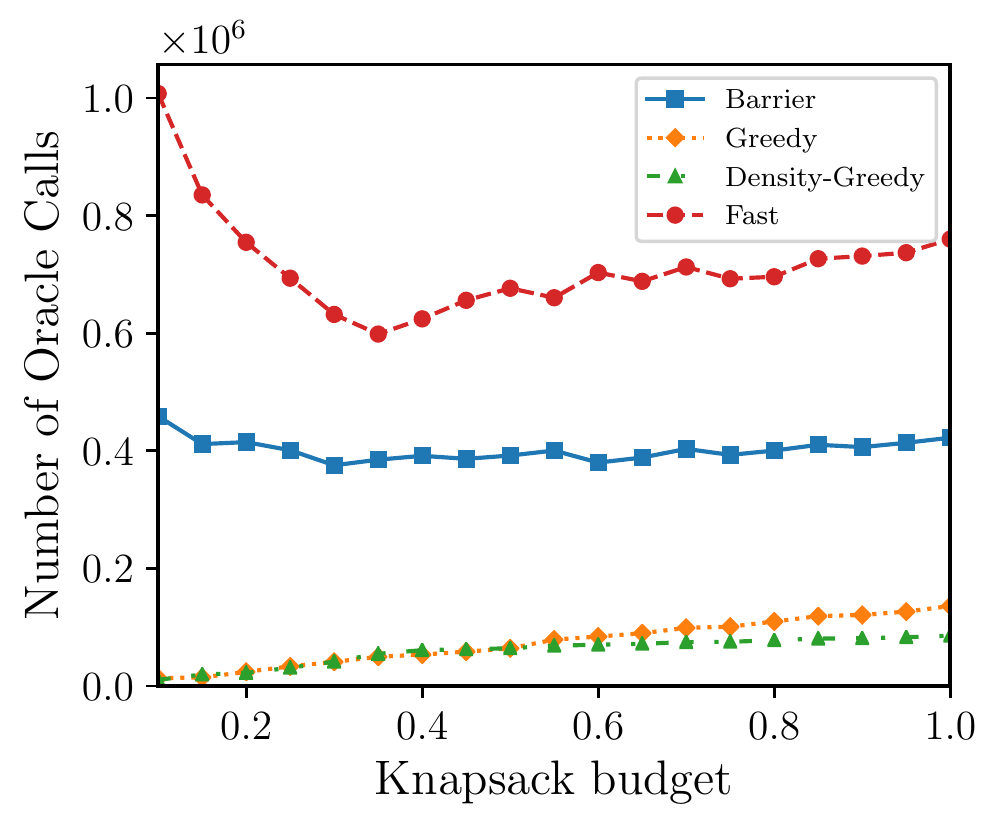}\label{fig:graph-o-email}}
	\label{fig:graph-cover}
	\caption{Vertex cover over real graphs: We compare algorithms for varying knapsack budges based on objective value and number of calls to the Oracle.}
\end{figure*}

\subsection{Video Summarizing Application} 
\label{sec:movie}

Video summarization, as a key step for faster browsing and efficient indexing of large video collections, plays a crucial role in many data mining procedures.
In the second application, we want to summarize a collection of five videos from VSUMM dataset \cite{de2011vsumm}\footnote{Available for download from: \url{https://sites.google.com/site/vsummsite/}}.
Our objective is to select a subset of frames from these videos in order to maximize a utility function $f(S)$ (which represents the diversity of frames). 
 We set limits for the maximum number of allowed frames from each video (referred to as $m_i$), where we consider the same value of $m_i$ for all five videos.
 We also want to bound the total entropy of the selection as a proxy for the storage size of the selected summary.

In order to extract features from frames of each video, we apply a pre-trained ResNet-18 model \cite{he2016deep}.
Then given a set of frames, we define the matrix $M$ such that $M_{ij}= e^{-\lambda \cdot \textrm{dist}(x_i,x_j)}$, where $\text{dist}(x_i,x_j)$ denotes the Euclidean distance between the feature vectors of $i$-th and $j$-th frames, respectively.
Matrix $M$ implicitly represents a similarity matrix among different frames of a video.
The utility of a set $S \subseteq \cN$ is defined as a non-negative and monotone submodular objective $f(S) = \log \det (\mathbf{I} + \alpha M_S)$, where $\mathbf{I}$ is the identity matrix, $\alpha > 0$ and $M_S$ is the principal sub-matrix of similarity matrix $M$ indexed by $S$ \cite{herbrich2003fast}. Informally, this function is meant to measure the diversity of the vectors in $S$.
A knapsack constraint $c$ captures the entropy of each frame. More specifically, for a frame $u$ we define $c(u) = \nicefrac{\mathrm{H}(u)}{20}$.

In \cref{fig:video-f-budget,fig:video-o-budget}, we set the maximum number of allowed frames from each video to $m_i = 10$ and compare the algorithms for varying values of the knapsack budget. We observe that (i) \AlgBarrier returns solutions with a higher utility (up to 50\% more than the second-best algorithm), and (ii) the running time of the Fast algorithm is lower than our proposed algorithm. This experiment showcases the fact that \AlgBarrier effectively trades off some amount of computational complexity in order to increase the objective values by a huge margin.
In \cref{fig:video-f-g,fig:video-o-g}, we evaluate the performance of algorithms based on the maximum number of allowed frames from each video, i.e., $m_i$. While the objective value of $\AlgBarrier$ clearly exceeds the three other baseline algorithms, its computational complexity follows the same behavior as \cref{fig:video-o-budget}.
Another important observation is that both Greedy and Density Greedy do not have consistent performance across different applications. 
For example, while in the experiments of \cref{fig:graph-f-1912} in \cref{sec:vertex-cover} the Greedy algorithm returns solutions with much higher utilities than Density Greedy, as we see in \cref{fig:video-f-budget}, the performance of Density Greedy is even slightly better than Fast for the video summarization task.
It is worthwhile to mention that, by increasing the value of $m_i$, the maximum cardinality of a feasible solution $r$ increases linearly; as stated by Theorem~\ref{thm:matroids+knapsacks}, the computational complexity of \AlgBarrier increases (see \cref{fig:video-o-g}).

\begin{figure*}[htb!] 
	\centering  
	\subfloat[$m_i = 10$] {\includegraphics[height=32.7mm]{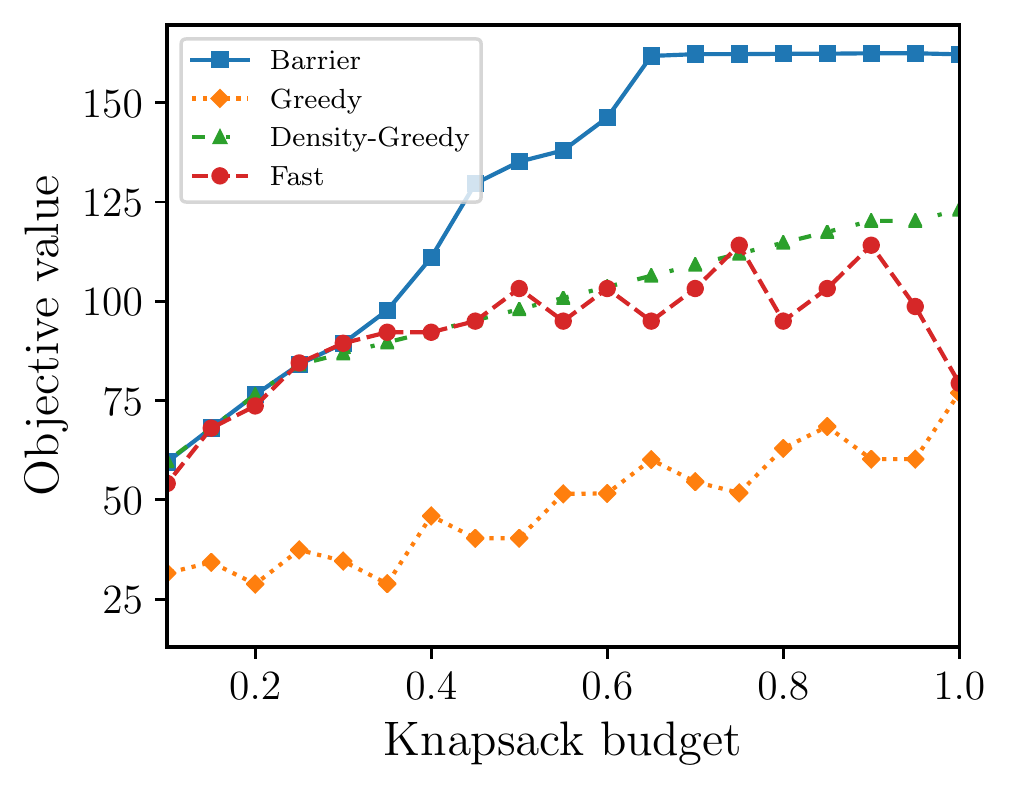}\label{fig:video-f-budget}}
	\subfloat[Knapsack budget $= 1.0$]	
	{\includegraphics[height=32.7mm]{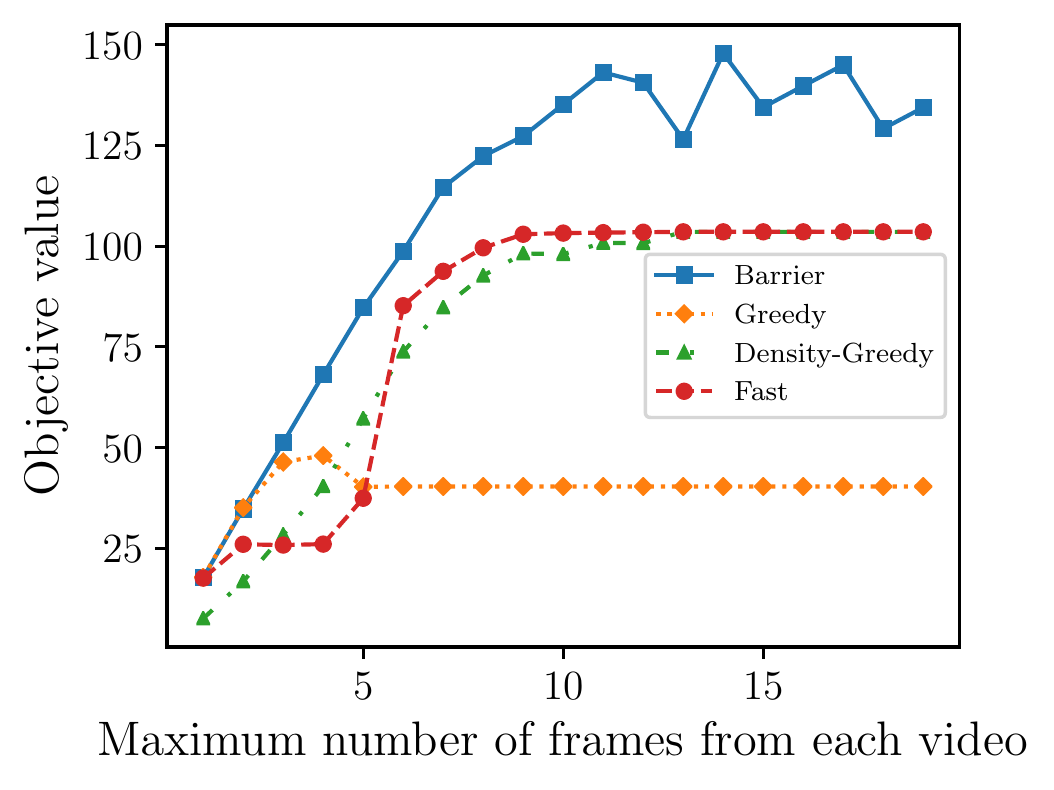}\label{fig:video-f-g}}
	\subfloat[$m_i = 10$]
	{\includegraphics[height=32.7mm]{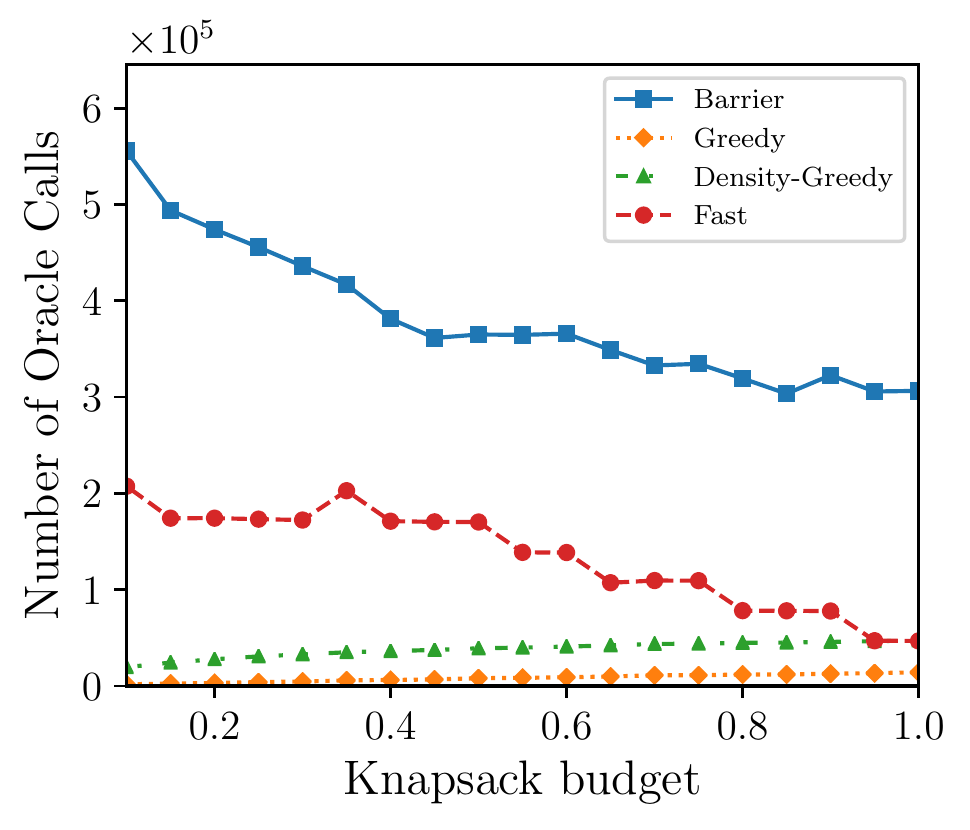}\label{fig:video-o-budget}}
	\subfloat[Knapsack budget $= 1.0$]	{\includegraphics[height=32.7mm]{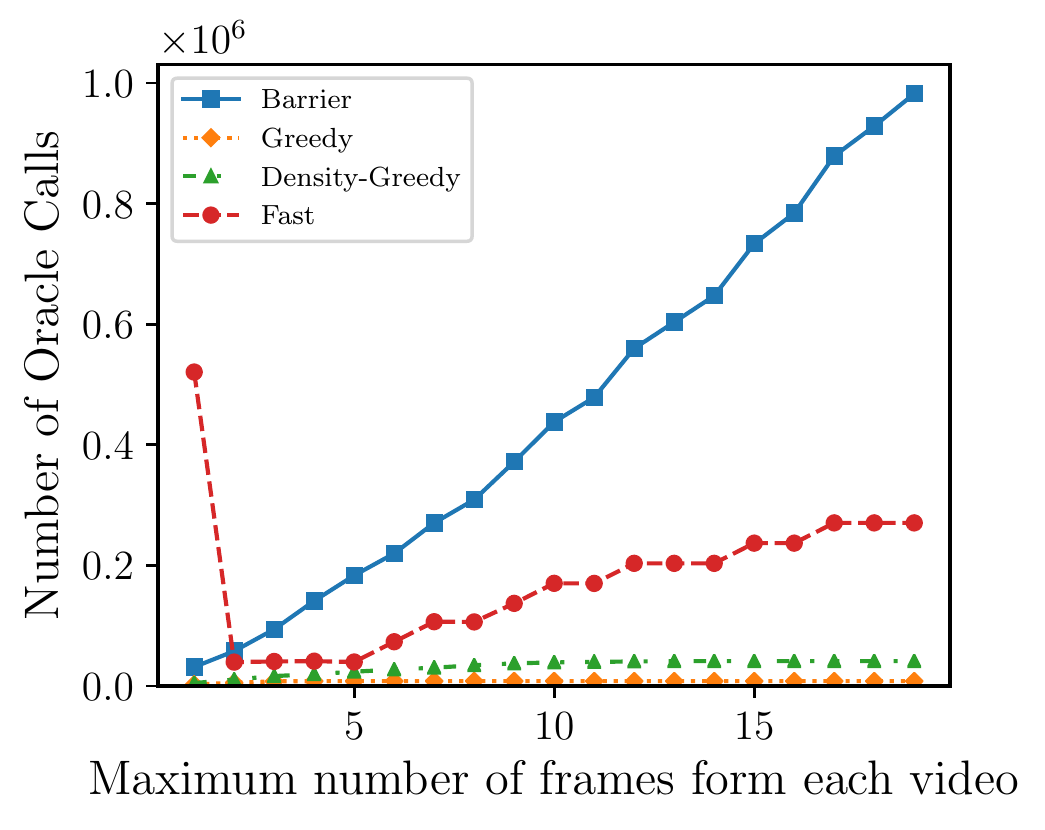}\label{fig:video-o-g}}
	\label{fig:video}
	\caption{We summarize a collection of five different videos. (a) and (c) compare algorithms for varying knapsack budgets. (b) and (d) compare algorithms by changing the limit for the maximum number of allowed frames from each video. We also set $\lambda = 1.0$.}
	\vspace{-15pt}
\end{figure*}

\begin{figure*}[ht] 
	\centering  
	\subfloat[$m=30, m_i=10, \lambda=1.0$]	{\includegraphics[height=33.2mm]{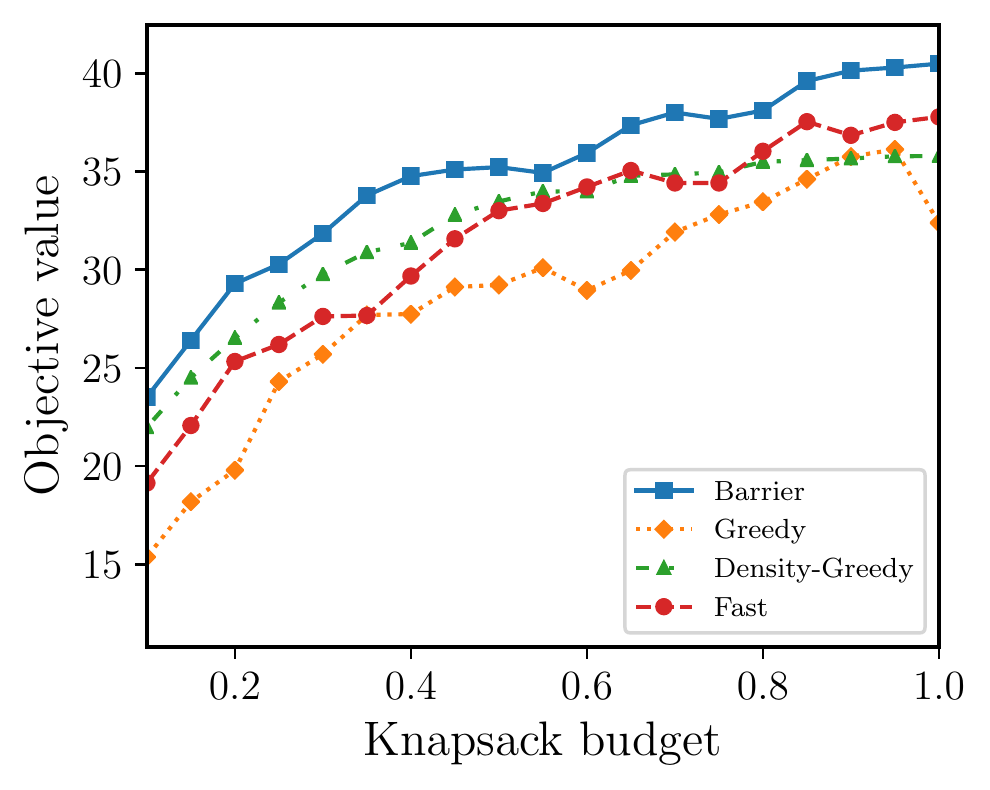}\label{fig:yelp-f-budget}}
	\subfloat[$\textrm{B}=1, m_i=20, \lambda=0.1$]
	{\includegraphics[height=33.2mm]{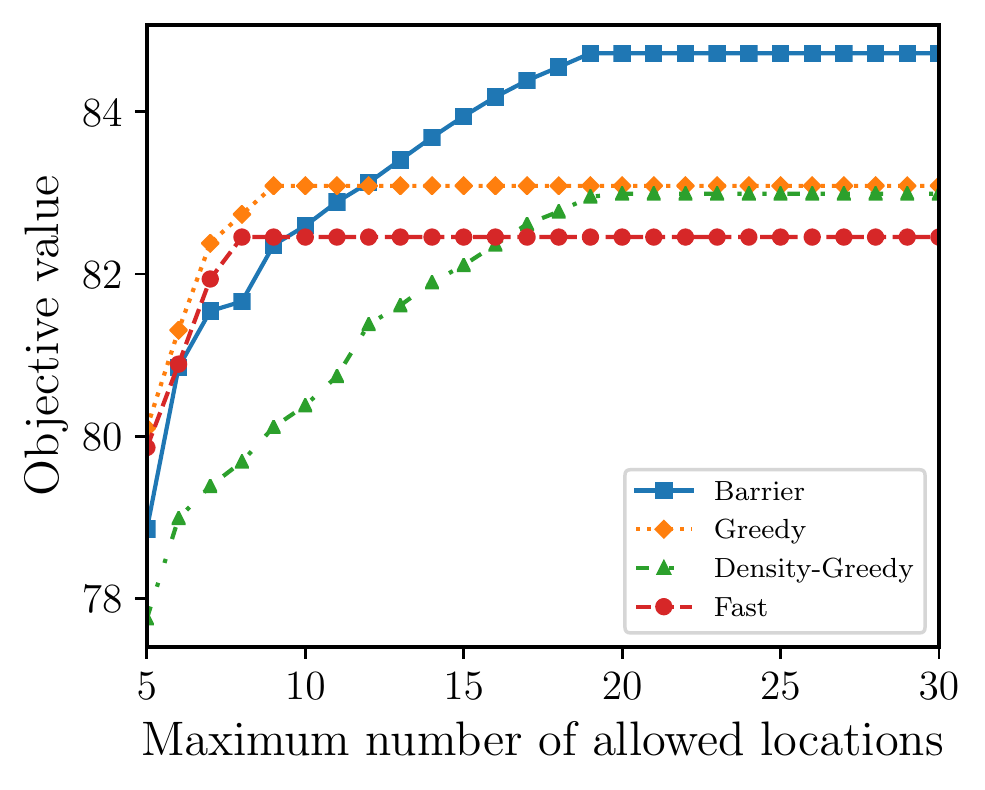}\label{fig:yelp-f-k}}
		\subfloat[$m=30, m_i=10, \lambda=1.0$] {\includegraphics[height=33.2mm]{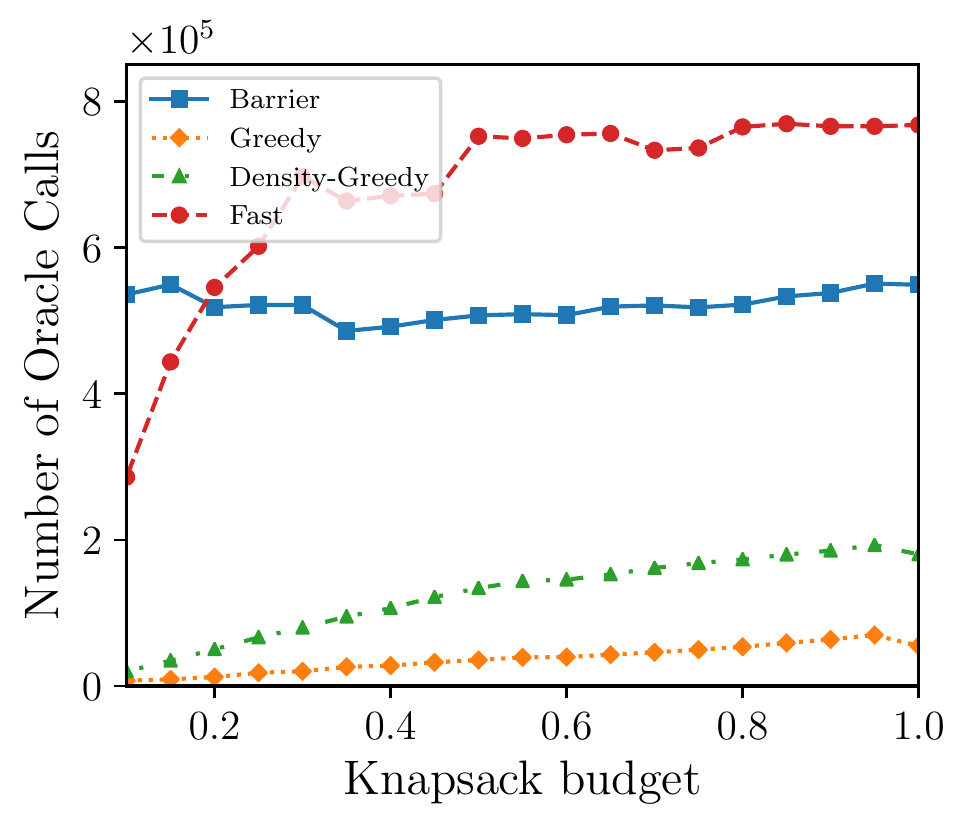}\label{fig:yelp-o-budget}}
	\subfloat[$\textrm{B}=1, m_i=20, \lambda=0.1$]	{\includegraphics[height=33.2mm]{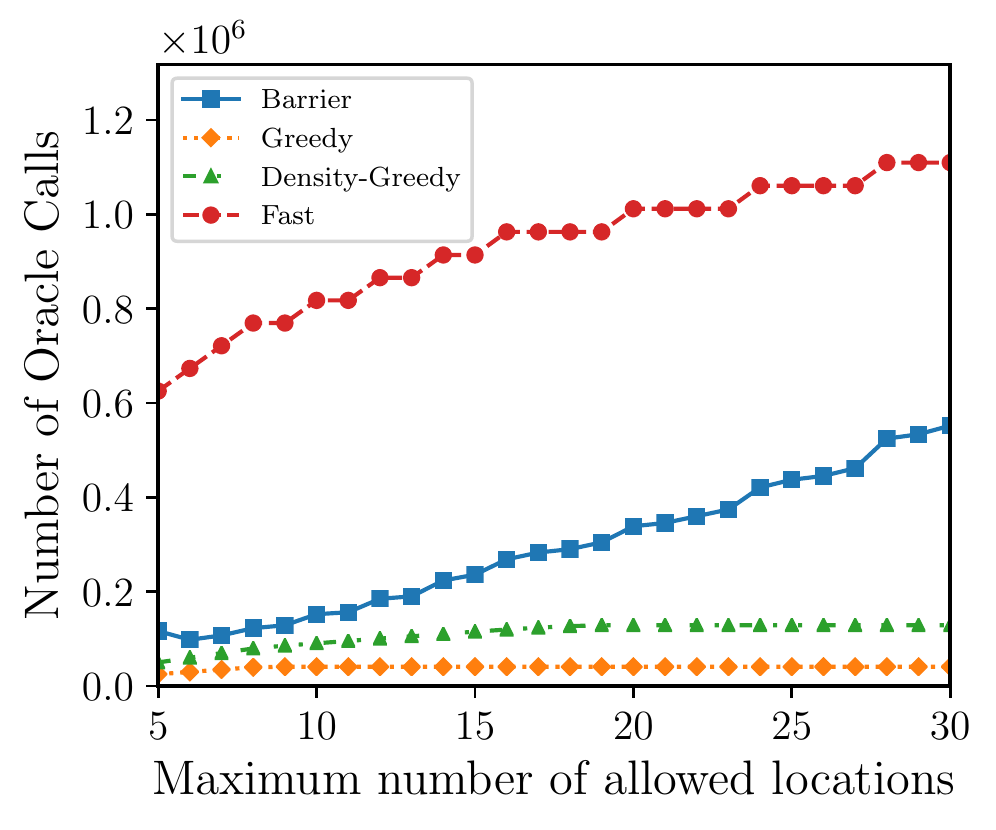}\label{fig:yelp-o-k}}
	\label{fig-yelp}
	\caption{Yelp location summarization: A feasible solution satisfies seven different uniform matroids and three knapsack constraints.}
\end{figure*}

\subsection{More than One Knapsack} \label{sec:multiple-knapsack}

In the first set of experiments, we investigated scenarios where there is only one knapsack constraint. Recall that in \cref{sec:heuristic}, inspired by the main theoretical results of \cref{sec:alg-barrier}, we developed a heuristic algorithm called \AlgHeuristic with the goal of improving the practical performance for cases with multiple knapsacks.
In this section, we report the result of this heuristic algorithm.

 \subsubsection{Yelp Location Summarization}
\label{sec:yelp}

In this section, we consider the Yelp location summarization application, where we have access to thousands of business locations with several related attributes. Our objective is to find a representative summary of the locations from the following cities: Charlotte, Edinburgh, Las Vegas, Madison, Phoenix, and  Pittsburgh.
In these experiments, we use the Yelp Academic dataset \cite{yelp} which is a subset of Yelp's reviews, business descriptions and user data \cite{yelporig}.
For feature extraction, we used the description of each business location and reviews. 
The features contain information regarding many attributes including having vegetarian menus, existing delivery options, the possibility of outdoor seating and  being good for groups.\footnote{Script is provided at \url{https://github.com/vc1492a/Yelp-Challenge-Dataset}.}

Suppose we want to select, out of a ground set $\cN = \{1, \dots , n \} $, a subset of locations that provides a good representation of all the existing business locations.
The quality of each subset of locations is evaluated by a facility location function which we explain next.
A facility at location $i$ is a representative of location $j$ with a similarity value $M_{i,j}$, where $M \in \bR^{n \times n}$. 
For calculating the similarities, similar to the  method described in \cref{sec:movie}, we use $M_{ij} = e^{-\lambda \cdot  \text{dist}(v_i,v_j)}$, where $v_i$ and $v_j$ are extracted feature vectors for locations $i$ and $j$. For a selected set $S$, if each location $i \in \cN$ is represented by a location from set $S$ with the highest similarity, the total utility provided by a set $S$ is modeled by the following monotone and submodular set function \cite{krause12survey,frieze1974cost}:
$f(S) =\frac{1}{n} \sum_{i=1}^{n} \max_{j \in S} M_{i,j}.$

For this experiment, we impose a combination of several constraints: (i) there is a limit $m$ on the total size of summary, (ii) the maximum number of locations from each city is $m_i$, and (iii) three knapsacks $c_1, c_2,$ and $c_3$ where $c_i(j) = \textrm{distance}(j, \textrm{POI}_i)$ is the distance of location $j$ to a point of interest in the corresponding city of $j$. For POIs we consider down-town, an international airport and a national museum in each one of the six cities.
One unit of budget is equivalent to 100km, which means the sum of distances of every set of feasible locations to the point of interests (i.e., down-towns, airports or museums) is at most 100km if we set knapsack budget to one.

In \cref{fig:yelp-f-budget,fig:yelp-o-budget}, we evaluate the performance of algorithms for a varying knapsack budget. We set maximum cardinality of a feasible set to $m=30$, the maximum number of allowed locations from each city to $m_i = 10$ and $\lambda$ to $1.0$. These figures demonstrate that \AlgHeuristic has the best performance in terms of objective value and outperforms the Fast algorithm with respect to computational complexity.
In the second set of experiments, in \cref{fig:yelp-f-k,fig:yelp-o-k}, we compare algorithms based on different upper limits on the total number of allowed locations, where we set the knapsack budgets to one, $m_i$ to $20$, and $\lambda$ to  $0.1$.
Again, from our experiments, it is clear that \AlgHeuristic outperforms Fast and the other baseline algorithms by a huge margin in this setting.

\subsubsection{Twitter Text Summarization} \label{sec:twitter}

As of January 2019, six of the top fifty Twitter accounts  are dedicated primarily to news reporting. In this application, we want to produce representative summaries for Twitter feeds of several news agencies with the following Twitter accounts (also known as ``handles''): @CNNBrk, @BBCSport, @WSJ, @BuzzfeedNews, @nytimes, and @espn.
Each of these handles has millions of followers. Naturally, such accounts commonly share the same headlines and it would be very valuable if we could produce a summary of stories that still relays all the important information without repetition. 

In this application, we use the Twitter dataset from \citep{kazemi2019submodular}, where the keywords from each tweet are extracted and weighted proportionally to the number of retweets the post received. In order to capture diversity in a selected set of tweets,  similar to the approach of \citet{kazemi2019submodular}, we define a monotone and submodular function $f$ defined over a ground set $\cN$ of tweets, where we take the square root of the value assigned to each keyword. 
Each tweet $e \in \cN$ consists of a positive value $\text{val}_e$ denoting its number of retweets and a set of $\ell_e$ keywords $\cW_e = \{ w_{e,1}, \cdots, w_{e, \ell_e}\}$ from the set of all existing keywords $\cW$.
For a tweet $e$,  the score of a word $w \in \cW_e $ is defined by $\text{score}(w,e) = \text{val}_e$.  If $w \notin \cW_e $, we  define $\text{score}(w,e) = 0$.
The function $f$, for a set $S \subseteq \cN$ of tweets, is defined as follows:
\[f(S) = \sum_{w \in \cW} \sqrt{\sum_{e \in S} \text{score}(w,e)} \enspace .\]

A feasible summary should have at most five tweets from each one of the accounts with an upper limit of $15$ on the total number of tweets. Again, this constraint is the intersection of a uniform matroid and a partition matroid. In addition, it should satisfy existing knapsack constraints.
For the first knapsack $c_1$, the cost of each tweet $e$ is weighted proportionally to the difference between the time of $e$ and January 1, 2019, i.e., $c_1(e) \propto |\printtime - \textrm{T}(e)|$. The goal of this knapsack is to provide a summary that mainly captures the events happened around the beginning of the year 2019.
For the second knapsack $c_2$ the cost of tweet $e$ is proportional to the length of each tweet $|\cW_e|$  which enables us to provide shorter summaries.
Each unit of knapsack budget is equivalent to roughly 10 months for $c_1$ and 26 keywords for $c_2$, respectively.

In \cref{fig:twitter-f-1,fig:twitter-o-1}, we compare algorithms under only one knapsack constraint. Similar to the trends in the previous experiments, we observe that \AlgBarrier provides the best utilities, where its number of Oracle calls is competitive with respect to Fast.
In \cref{fig:twitter-f-2,fig:twitter-o-2}, we report the experimental results subject to two knapsacks $c_1$ and $c_2$.
We see that \AlgHeuristic returns the solutions with the highest objective values with a fewer number of calls to the Oracle with respect to Fast. 
We should emphasize that both Greedy and Density Greedy algorithms, due to their simplicity and lack of theoretical guarantees, have the lowest computational complexities.
Finally, by comparing the scenarios with one and two knapsacks, it is evident that having more knapsacks reduces objective values and computational complexity. The main reason for this phenomenon is that by imposing more constraints the size of all feasible sets decreases.

\begin{figure*}[ht] 
	\centering  
	\subfloat[One knapsack $c_1$] {\includegraphics[height=33.mm]{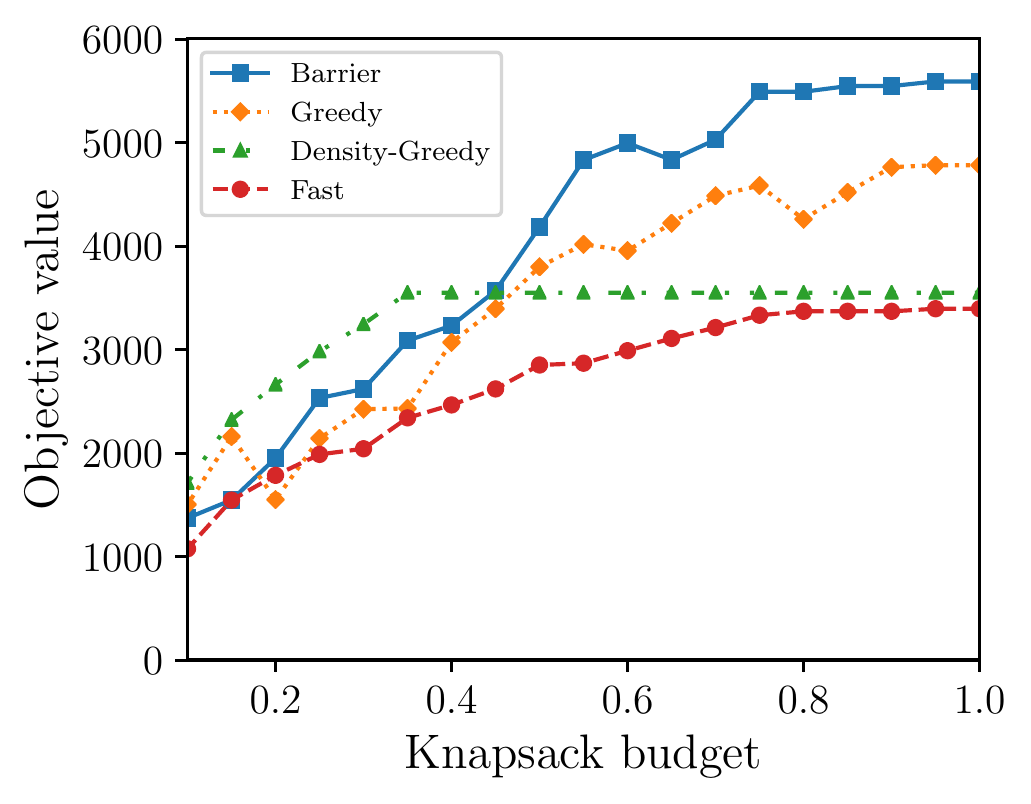}\label{fig:twitter-f-1}}
	\subfloat[Two knapsacks $c_1$ and $c_2$]
	{\includegraphics[height=33.mm]{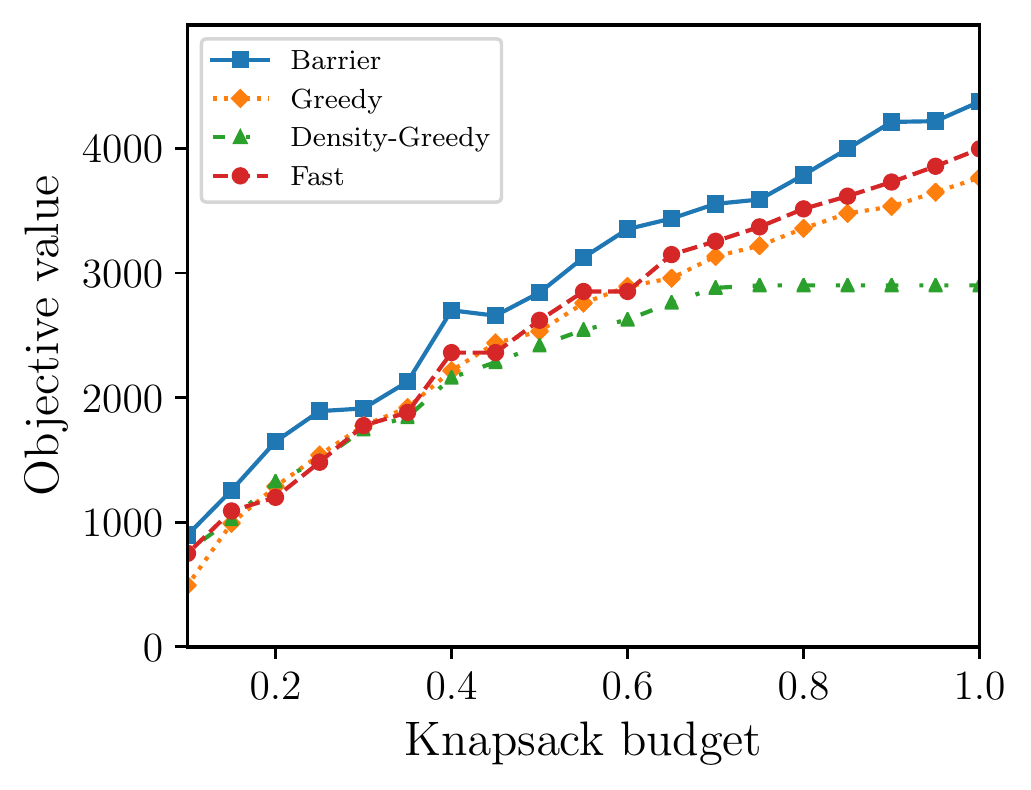}\label{fig:twitter-f-2}}
	\subfloat[One knapsack $c_1$]	{\includegraphics[height=33.mm]{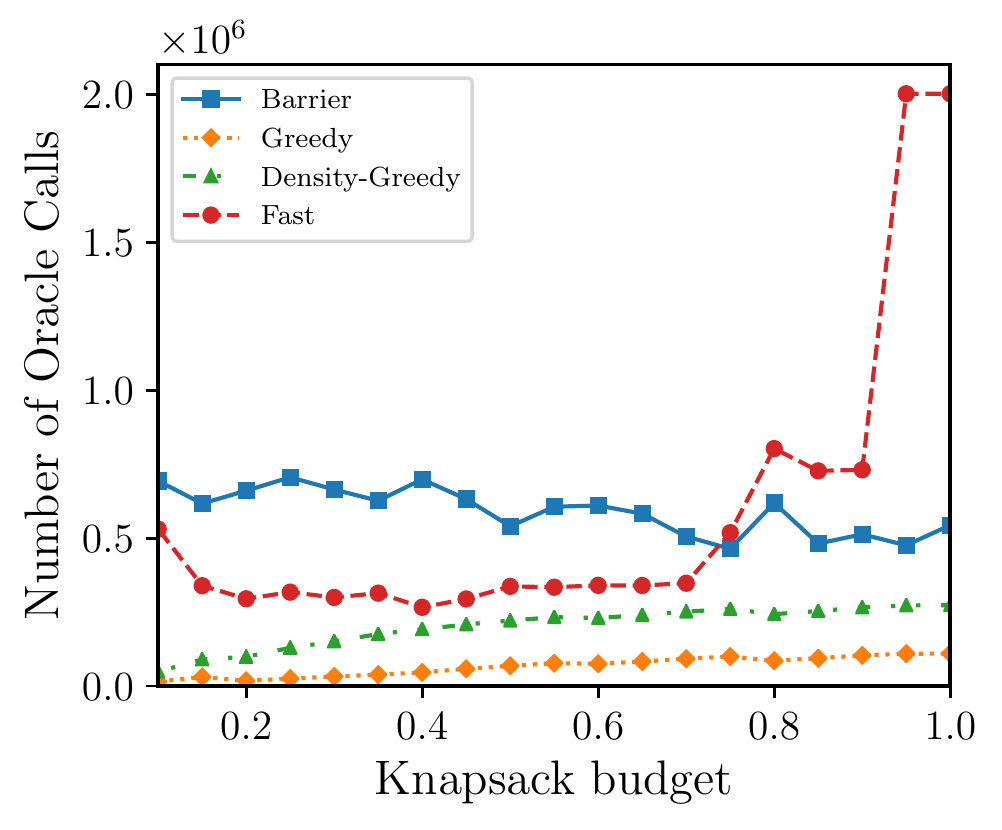}\label{fig:twitter-o-1}}
	\subfloat[Two knapsacks $c_1$ and $c_2$]	{\includegraphics[height=33.mm]{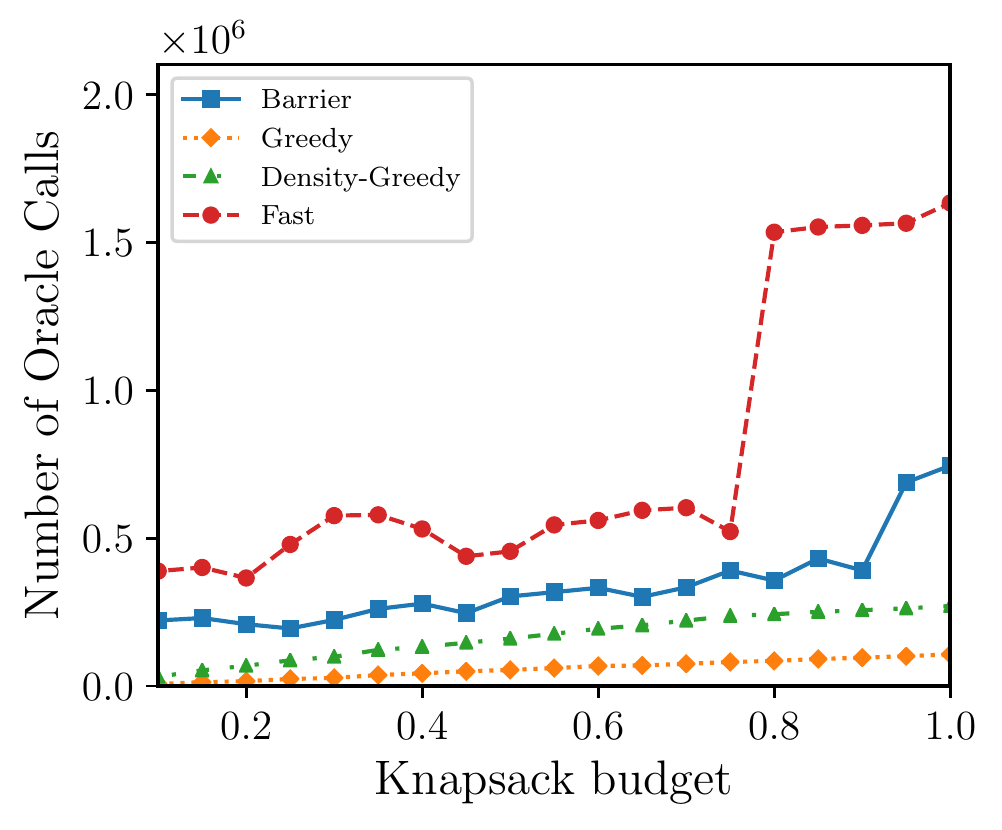}\label{fig:twitter-o-2}}
	\caption{Twitter text summarization: We compare algorithms based on varying knapsack budget. For knapsacks we have $c_1(e) =  |\printtime- \textrm{T}(e)|$ and $c_2(e) = |\cW_e|$.  \label{fig:twitter}}
\end{figure*}

\subsubsection{Movielens Recommendation System} \label{sec:Movielens}

In the final application, our objective is to recommend a set of diverse movies to a user. For designing our recommender system, we use ratings from MovieLens dataset \cite{harper2015movielens}, and apply the method proposed by \citet{lindgren2015sparse} to extract a set of attributes for each movie.
For this experiment, we consider a subset of this dataset which contains 1793 movies from the three genres of Adventure, Animation, and Fantasy. 
For a ground set of movies $\cN$, assume $v_i$ represents the feature vector of the $i$-th movie.
Following the same approach we used in \cref{sec:movie}, we define a similarity matrix $M$ such that $M_{ij} = e^{-\lambda \cdot  \text{dist}(v_i,v_j)}$, where $\text{dist}(v_i,v_j)$ is the euclidean distance between vectors $v_i, v_j \in \cN$. 
The objective of each algorithm is to select a subset of movies that maximizes  the following monotone and submodular function:
$f(S) = \log \det (\mathbf{I} + \alpha M_S)$, where $\mathbf{I}$ is the identity matrix. 

The user specifies an upper limit $m$ on the number of movies for the recommended set, as well as an upper limit $m_i$ on the number of movies from each one of the three genres. This constraint represents a $k$-matchoid independence system with $k = 4$, because a single movie may be identified with multiple genres and the constraint over the genres is not a partition matroid anymore. 
In addition to this $k$-matchoid constraint, we consider three different knapsacks.
For the first knapsack $c_1$, the cost assigned to each movie is proportional to the difference between the maximum possible rating in the iMDB (which is $10$) and the rating of the particular movie---here the goal is to pick movies with higher ratings.
For the second and third knapsacks $c_2$ and $c_3$, the costs of each movie are proportional to the absolute difference between the release year of the movie and the year $1990$ and year $2004$.
The implicit goal of these knapsack constraints is to pick movies with a release year which is as close as possible to these years. 
More formally, for a movie $v \in \cN$, we have: $c_1(v) = 10 - \textrm{rating}_v $, $c_2(v) = \lvert 1990 - \textrm{year}_v \rvert$, and $c_3(v) = \lvert 2004 - \textrm{year}_v \rvert$. Here, $\textrm{rating}_v$ and $\textrm{year}_v$, respectively, denote the IMDb rating and the release year of movie $v$.
We normalize the knapsacks such that the average cost of each movie is $\nicefrac{1}{10}$, i.e., $\frac{\sum_{v \in \cN}c_i(V)}{|\cN|} = \nicefrac{1}{10}$.
For simplicity, we use a single value $m_i = 20$ for all genres, and we set $\lambda = 0.1$. 

In \cref{fig:movie-f-k,fig:movie-o-k}, we evaluate the algorithms for varying the maximum number of allowed movies in the recommendation. For the knapsacks, we consider $c_1$ and $c_2$.
In this experiment, we set the knapsack budget to $\nicefrac{1}{4}$.
In \cref{fig:movie-f-budget,fig:movie-o-budget}, we compare algorithms based on different values of the knapsack budget, where we consider all the three knapsack constraints.
In both of these settings, we again confirm that \AlgHeuristic, with a very modest computational complexity, outperform state-of-the-art algorithms in terms of the quality of recommended movies.

\begin{figure*}[ht] 
	\centering  
	\subfloat[Two knapsacks] {\includegraphics[height=33.mm]{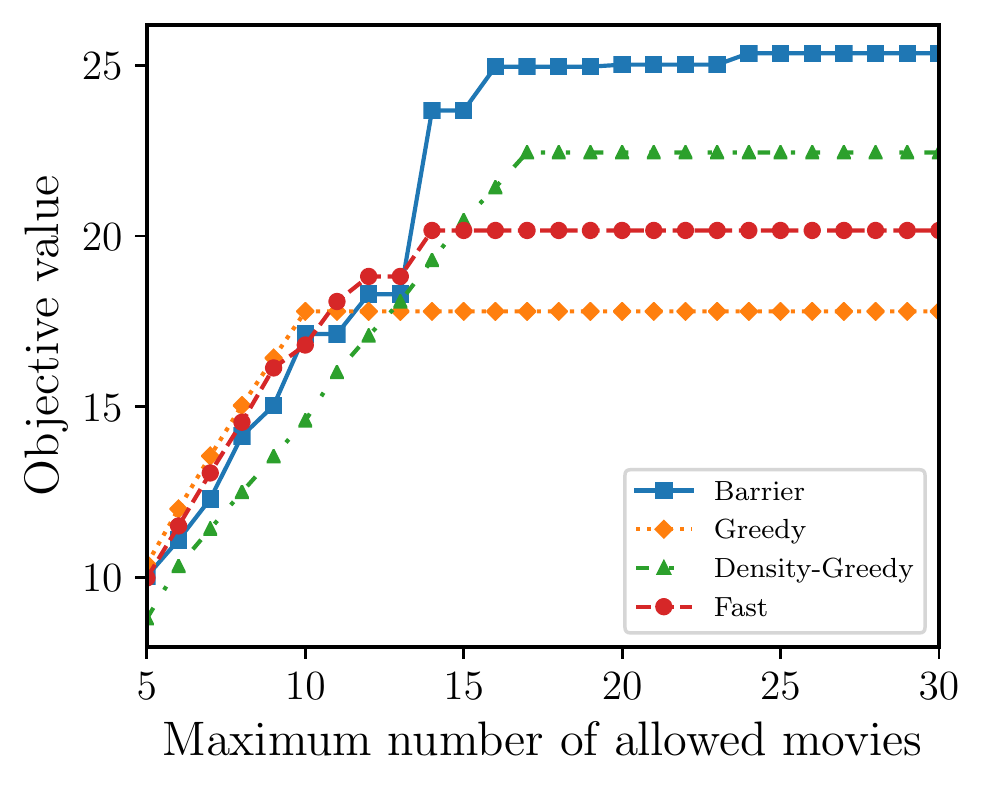}\label{fig:movie-f-k}}
	\subfloat[Three knapsacks]
	{\includegraphics[height=33.mm]{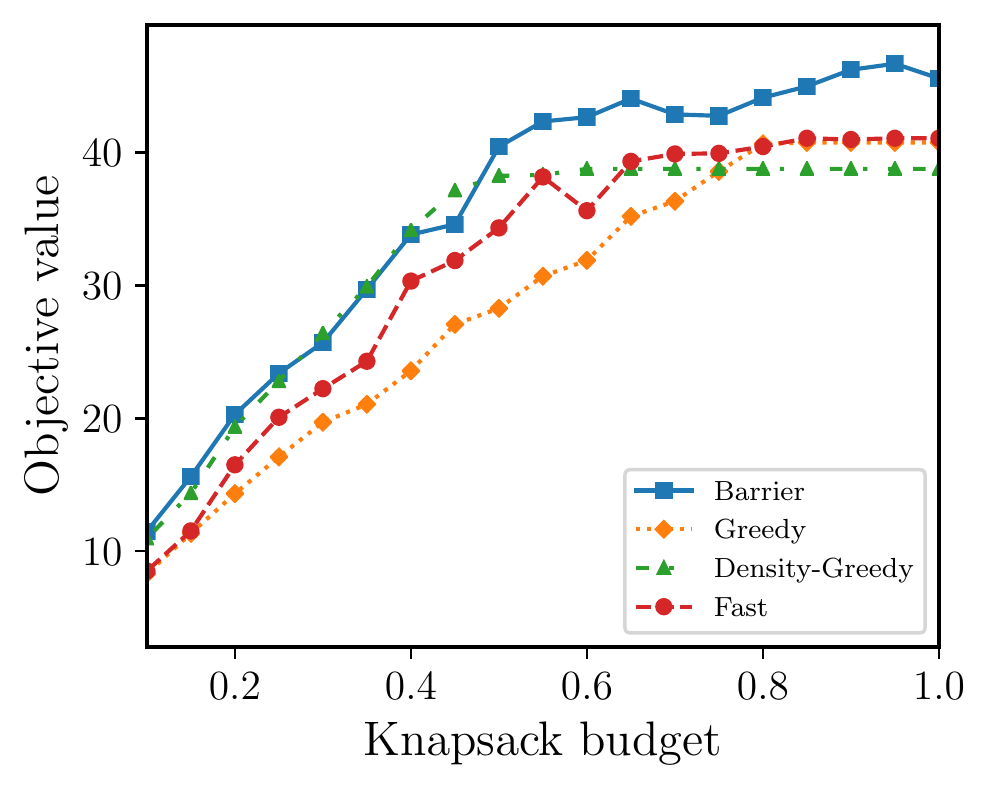}\label{fig:movie-f-budget}}
	\subfloat[Two knapsacks]	{\includegraphics[height=33.2mm]{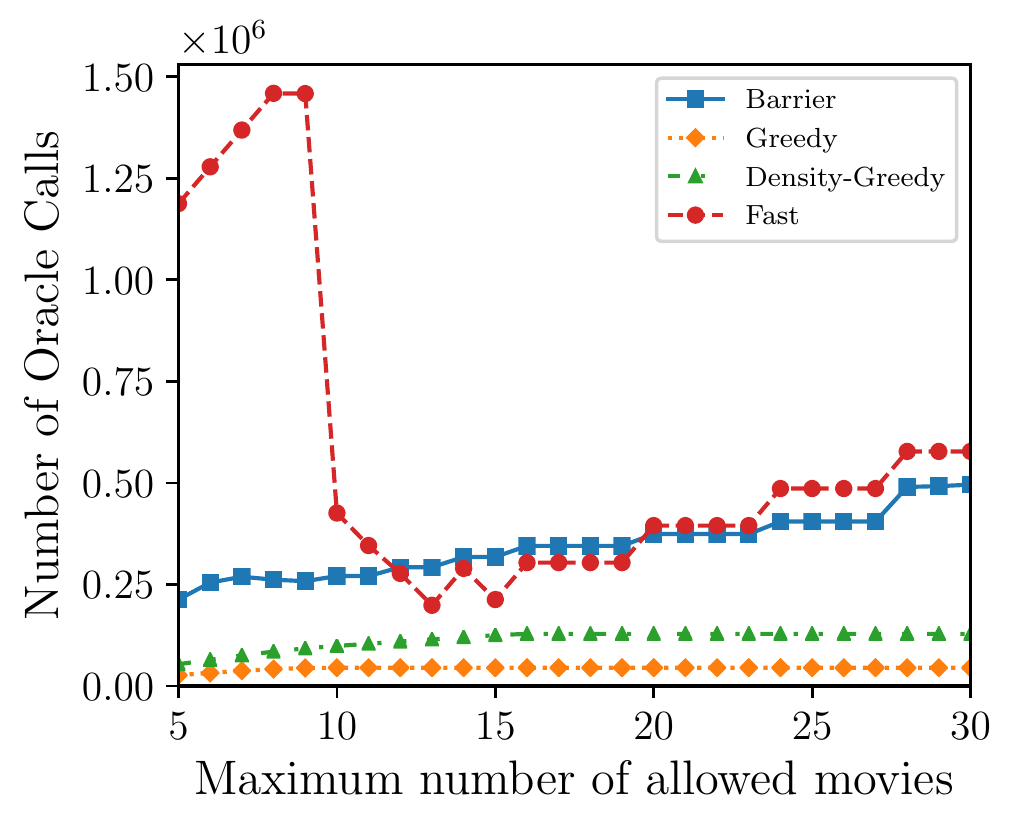}\label{fig:movie-o-k}}
	\subfloat[Three knapsacks]	{\includegraphics[height=33.2mm]{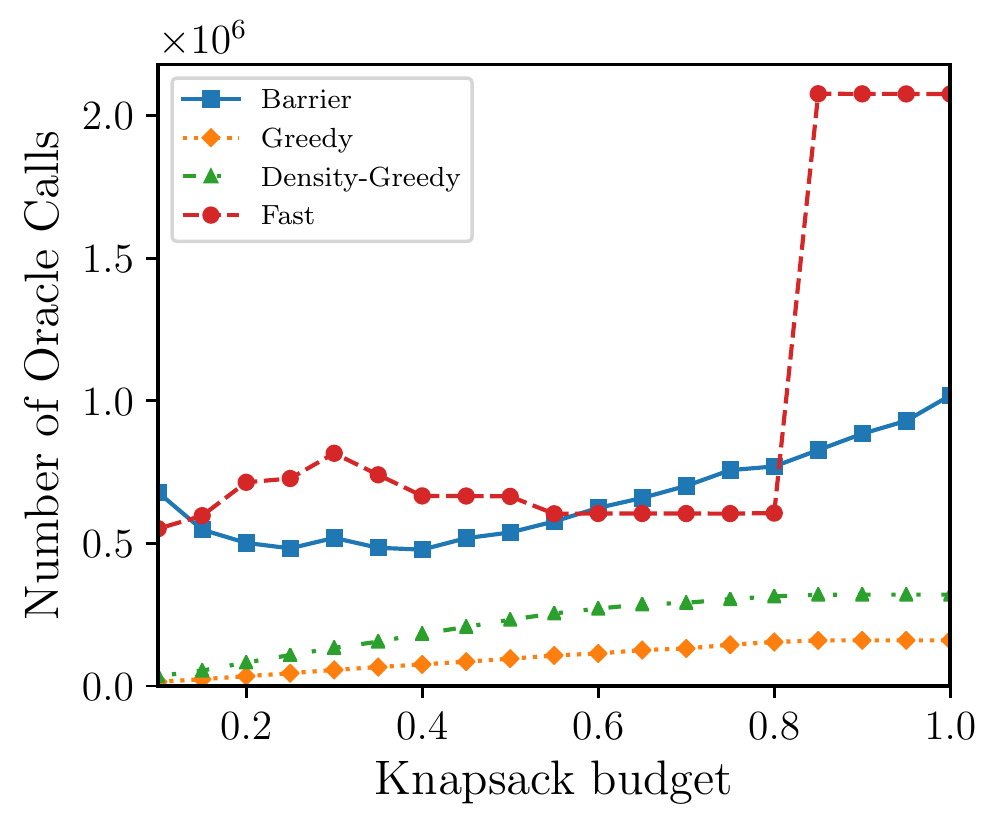}\label{fig:movie-o-budget}}
	\label{fig:movie}
	\caption{Movie recommendation: We compare the performance of algorithms over the Movielens dataset. In (a) and (c), we set the knapsack budget to $\nicefrac{1}{4}$.
		In (b) and (d), we set the maximum cardinality of a feasible solution to $30$.
		We set $\lambda = 0.1$.}
\end{figure*}

\section{Conclusion} \label{sec:conclusion}
	In this paper, we introduced a novel technique for constrained submodular maximization by borrowing the idea of barrier functions from continuous optimization domain. By using this new technique, we proposed two algorithms for maximizing a monotone and submodular function subject to the intersection of a $k$-matchoid and $\ell$ knapsack constraints. The first algorithm, \AlgBarrier, obtains a $2(k+1+\epsilon)$-approximation ratio and runs in $\tilde{O}(n r^2)$ time, where $r$ is the maximum cardinality of a feasible solution. The second algorithm, \AlgImproved, improves the approximation factor to $(k+1+\epsilon)$ by increasing the time complexity to $\tilde{O}(n^3 r^2)$.	We hope that our proposed method devise new algorithmic tools for constrained submodular optimization that could scale to many previously intractable problem instances. We also extensively evaluated the performance of our proposed algorithm over several real-world applications.
\bibliographystyle{plainnat}
\bibliography{./tex/submodfast}

\end{document}